\documentclass{article}

\usepackage{microtype}
\usepackage{graphicx}
\usepackage{subcaption}
\usepackage{booktabs} % for professional tables
\usepackage{algorithm}

\usepackage{hyperref}

\usepackage[preprint]{icml2026}

\usepackage{amsmath}
\usepackage{amssymb}
\usepackage{mathtools}
\usepackage{amsthm}
\usepackage{multirow} 
\usepackage{booktabs}

\usepackage[capitalize,noabbrev]{cleveref}

\theoremstyle{plain}
\newtheorem{theorem}{Theorem}[section]
\newtheorem{proposition}[theorem]{Proposition}

\theoremstyle{definition}

\newtheorem{assumption}[theorem]{Assumption}
\theoremstyle{remark}
\newtheorem{remark}[theorem]{Remark}

\usepackage{amsfonts,amsmath,amssymb,stmaryrd,bbm, amsthm }
\newcommand{\bfa}{\mathbf{a}}  
  
\newcommand{\bfg}{\mathbf{g}}

 \newcommand{\bfw}{\mathbf{w}}

\newcommand{\bfzero}{\mathbf{0}} 

\renewcommand{\P}{\mathcal{P}}  
  \newcommand{\Uc}{\mathcal{U}}

 \newcommand{\EE}{\mathbb{E}}

\newcommand{\PP}{\mathbb{P}}  \newcommand{\RR}{\mathbb{R}}

\renewcommand{\d}{\mathrm{d}} % for the integrals 
\newcommand{\argmin}{{\mathrm{argmin}}}

\newcommand{\OT}{{\mathrm{OT}}}

\newcommand{\hatrhoa}{\hat{\rho}_{m,\mathbf{a}}}

\newcommand{\hatrho}{\hat{\rho}_{n}}

\newcommand{\e}{\varepsilon}
\newcommand{\dint}{d_{\mathrm{int}}}
\renewcommand{\hat}{\widehat}

\usepackage[textsize=tiny]{todonotes}
\icmltitlerunning{Geometry-Aware Optimal Transport: Fast Intrinsic Dimension and Wasserstein Distance Estimation}

\begin{document}

\onecolumn
  \icmltitle{Geometry-Aware Optimal Transport: Fast Intrinsic Dimension and Wasserstein Distance Estimation}

  \icmlsetsymbol{equal}{*}

  \begin{icmlauthorlist}
    \icmlauthor{Ferdinand Genans}{lpsm}
    \icmlauthor{Olivier Wintenberger}{lpsm,wpi}
  \end{icmlauthorlist}

  \icmlaffiliation{lpsm}{LPSM, Sorbonne Université, Paris, France}
  \icmlaffiliation{wpi}{Wolfgang Pauli Institute, Vienna, Austria}

  \icmlcorrespondingauthor{Ferdinand Genans}{genans.ferdinand@gmail.com}

  % You may provide any keywords that you find helpful for describing your
  % paper; these are used to populate the "keywords" metadata in the PDF but
  % will not be shown in the document
  \icmlkeywords{Machine Learning, ICML}

  \vskip 0.3in

% this must go after the closing bracket ] following \twocolumn[ ...

% This command actually creates the footnote in the first column listing the
% affiliations and the copyright notice. The command takes one argument, which
% is text to display at the start of the footnote. The \icmlEqualContribution
% command is standard text for equal contribution. Remove it (just {}) if you
% do not need this facility.

% Use ONE of the following lines. DO NOT remove the command.
% If you have no special notice, KEEP empty braces:
\printAffiliationsAndNotice{}  % no special notice (required even if empty)
% Or, if applicable, use the standard equal contribution text:
% \printAffiliationsAndNotice{\icmlEqualContribution}
\begin{abstract}
Solving large scale Optimal Transport (OT) in machine learning typically relies on sampling measures to obtain a tractable discrete problem. While the discrete solver's accuracy is controllable, the rate of convergence of the discretization error is governed by the intrinsic dimension of our data. Therefore, the true bottleneck is the knowledge and control of the sampling error. In this work, we tackle this issue by introducing novel estimators for both sampling error and intrinsic dimension. The key finding is a simple, tuning-free estimator of $\text{OT}_c(\rho, \hat\rho)$ that utilizes the semi-dual OT functional and, remarkably, requires no OT solver. Furthermore, we derive a fast intrinsic dimension estimator from the multi-scale decay of our sampling error estimator. This framework unlocks significant computational and statistical advantages in practice, enabling us to (i) quantify the convergence rate of the discretization error, (ii) calibrate the entropic regularization of Sinkhorn divergences to the data's intrinsic geometry, and (iii) introduce a novel, intrinsic-dimension-based Richardson extrapolation estimator that strongly debiases Wasserstein distance estimation. Numerical experiments demonstrate that our geometry-aware pipeline effectively mitigates the discretization error bottleneck while maintaining computational efficiency.
\end{abstract}

\section{Introduction}

Many modern machine learning objects are most naturally modeled as probability distributions: think images (as distributions of pixel intensities), point clouds, word embeddings, or empirical datasets viewed through their sampling measures. A principled way to compare and manipulate such objects is provided by Optimal Transport (OT), which seeks the lowest-cost plan for moving mass from one distribution to another. Through this formulation, one obtains the family of Wasserstein (Earth Mover’s) distances and associated transport plans, which have become effective building blocks in complex ML pipelines \cite{peyre2019computational}, including generative modeling \cite{an2019ae, chen2019gradual}, speeding up diffusion models \cite{li2023dpm}, domain adaptation \cite{courty2014domain}, and natural language processing \cite{kusner2015word}.

These applications typically operate in the large-scale regime: data may arise from continuous distributions or from discrete distributions supported on an intractably large number of points. In practice, one therefore works with discretized approximations obtained by sampling. This simple step, which replaces unknown measures with empirical ones, has spurred two complementary research threads: (i) Statistical OT, which quantifies the error induced by this discretization; and (ii) Computational OT, which designs solvers that scale to the resulting large discrete problems.

\textbf{Statistical OT in theory.} A central question is how well OT quantities computed on samples approximate their population counterparts. The line of work initiated by \cite{dudley1969speed} analyzed the 1-Wasserstein discrepancy $W_1(\rho_n,\rho)$ when a continuous measure $\rho$ is replaced by its empirical counterpart $\rho_n$ built from $n$ i.i.d. samples with uniform weights. This was later generalized in \cite{fournier2015rate}, showing that $W_p(\rho_n,\rho)$ typically scales as $\mathcal{O}(n^{-1/d})$ in $\mathbb{R}^d$ for all $p\ge 1$, making explicit the curse of dimensionality: the number of samples required to accurately approximate OT quantities grows exponentially with dimension. Estimating the transport \emph{map} itself also suffers from this curse, as established in \cite{hutter2021minimax}. Several strategies mitigate these effects: exploiting low intrinsic dimensional structure \cite{WeedSharpRates}, or focusing on special settings such as semi-discrete OT \cite{pooladian2023minimax}. We refer to the recent book \cite{chewi2024statistical} for a comprehensive overview.

\textbf{OT and Intrinsic Dimension.} A key insight mitigating the curse of dimensionality is that real-world data rarely fully occupy the ambient space $\mathbb{R}^d$. As established in \cite{WeedSharpRates}, if the measure $\rho$ is supported on a manifold $\mathcal{M}$ of lower intrinsic dimension $d_{\mathcal{M}} \ll d$, the estimation error improves significantly to $\mathcal{O}(n^{-1/d_{\mathcal{M}}})$. Furthermore, OT exhibits a favorable multiscale behavior: the approximation error is not governed by a single global dimension, but rather tracks the covering numbers of the support at varying radii, often yielding superior non-asymptotic performance. Crucially, when  using OT to compare two different measures, this complexity adapts to the \textit{minimum} intrinsic dimension of the two measures \cite{hundrieser2024empirical}, and these properties extend to the entropic setting \cite{stromme2024minimum}.

\textbf{Computational OT.} On the algorithmic side, discrete OT can be posed as a linear program, but generic LP solvers are prohibitive at scale. The practical impact of OT in ML stems from specialized, structure-exploiting algorithms. A turning point was \cite{cuturi2013sinkhorn}, which introduced entropically regularized OT (EOT) solved efficiently by Sinkhorn iterations with complexity $\mathcal{O}(n^2/\varepsilon^2)$ to approximate discrete OT with $\varepsilon$-accuracy. Subsequent work improved constants and convergence guarantees, including primal-dual methods \cite{dvurechensky2018computational, lin2019efficient} and recent advances achieving $\mathcal{O}(n^2/\varepsilon)$ \cite{blanchet2023towards}, which appears optimal for this class of problems.

\textbf{The Sinkhorn Divergence: statistical-computational bridge and its limits.} A principled attempt to bridge statistical and computational OT lies in the study of \textit{Sinkhorn Divergences} \cite{ramdas2017wasserstein, genevay2018learning, feydy2019interpolating}. By explicitly defining the discrepancy through the discrete Sinkhorn divergence, one achieves a parametric estimation rate of $\mathcal{O}(\varepsilon^{-d/2}n^{-1/2})$  \cite{genevay2019sample, mena2019statistical, chizat2020faster}. Consequently, the curse of dimensionality is not removed but rather shifted into the regularization parameter $\varepsilon$. Crucially, the exponent $d$ governing this trade-off is the \textit{intrinsic} dimension $d_{\mathcal{M}}$ of the data manifold \cite{stromme2024minimum}, and a well-calibrated speed at which $\varepsilon \to 0$ nearly achieves the minimax rates to estimate the Wasserstein distance.  

\textbf{Missing Pieces.} Despite this progress, a fully operational link between statistical theory and computational practice remains elusive. While Sinkhorn Divergences offer a theoretical trade-off, calibrating it well requires knowledge of the typically unknown underlying intrinsic dimension $d_{\mathcal{M}}$ to select the optimal $\varepsilon$. As a result, practitioners are left without concrete guidance: they often drive numerical solvers to very high precision, implicitly treating the sampling error as negligible, or tune $\varepsilon$ heuristically without addressing the bias-variance trade-off. To apply OT with calibrated, problem-dependent accuracy, one requires a practical way to estimate the discretization error and the intrinsic dimension directly from the available samples. 

\paragraph{Contributions.}
To address these challenges, we propose a geometry-aware framework that estimates both the discretization error and the intrinsic dimension directly from samples, without relying on expensive ground-truth computations. Our specific contributions are:

\begin{itemize}
    \item \textbf{A solver-free discretization error estimator.} We introduce a novel estimator for the quantization error $\text{OT}_c(\rho,\rho_n^*)$, where $\rho_n^*$ represents the empirical measure with weights optimally adjusted to minimize the transport cost. Our key theoretical insight is that the semi-dual formulation of this problem admits a closed-form solution. This allows us to estimate the error via simple Monte Carlo integration, bypassing the need for an OT solver entirely. The resulting estimator is parameter-free, highly parallelizable on GPUs, and computationally inexpensive.

    \item \textbf{A scalable intrinsic dimension estimator.} Leveraging the convergence behavior of our error estimator, we derive an estimator of the empirical intrinsic dimension $d_{\text{int}}$ of the data. By analyzing the decay of the error as a function of sample size, our estimator captures the multi-scale geometric structure of the distribution, consistent with the theoretical findings in \cite{WeedSharpRates}. Crucially, this method scales linearly (i.e., $\mathcal{O}(n)$) in both time and space $\mathcal{O}(n)$, making it applicable to large-scale datasets.

    \item \textbf{Debiased Wasserstein estimation via Diagonal Richardson Extrapolation.} We propose a new extrapolation scheme that combines our intrinsic dimension estimate with Sinkhorn Divergences. While previous work \cite{chizat2020faster} utilized Richardson extrapolation solely on the regularization parameter $\varepsilon$, we introduce a \textit{diagonal} approach that links $\varepsilon$ to the sample size $n$ and the intrinsic dimension $d_{\text{int}}$. This joint debiasing strategy effectively cancels out both the first-order entropic bias and the statistical discretization error. This scheme permits us to have a convergence of $o(n^{-2/(d_{\text{int}}+4)})$ for $W_2^2$, compared to $o(n^{-2/(d+8)})$ for the $\e$-only Richardson scheme. 
\end{itemize}

We validate our pipeline on synthetic manifolds and real-world datasets (MNIST, CIFAR), demonstrating that our dimension estimator is accurate and that our extrapolation scheme yields Wasserstein estimates with significantly reduced bias compared to standard baselines.

\paragraph{Notations. } 
We note $\|\cdot\|$ the Euclidean norm and $\omega_d$ the volume of the unit $d$-dimensional ball. For $a, b \in \RR$, $a \vee b := \max\{a, b\}$ and $a \wedge b := \min \{a, b\}$. $\P(\RR^d)$ is the set of probabilities in $\RR^d$, and for $\rho \in \P(\RR^d)$, $\mathrm{Supp}(\rho)$ is its support. $\Delta_d$ represents the probability simplex in $\RR^d$: $\Delta_d := \{ w \in \RR_+^d : \sum_{i=1}^d w_i = 1 \}$. $\mathcal{O}(\cdot)$ and $o(\cdot)$ are the usual approximation orders. We use $f \lesssim g$ if there exists a universal constant $C > 0$ such that $f(\cdot) \leq C g(\cdot)$. We write $a \asymp b$ if both $a \lesssim b$ and $b \lesssim a$.

\section{Preliminaries}
\label{sec:preliminaries}

\subsection{Optimal Transport: Primal, Dual, and Semi-Dual}
We consider probability measures supported on a compact metric space $\mathcal{X} \subset \mathbb{R}^d$. Given source and target measures $\mu, \nu \in \mathcal{P}(\mathcal{X})$ and a cost function $c : \mathcal{X} \times \mathcal{X} \to \mathbb{R}_+$, the Kantorovich optimal transport problem is defined as:
\begin{align}\label{def::kantorovich}
   \mathrm{OT}_c(\mu, \nu) := \min_{\pi \in \Pi(\mu, \nu)} \int_{\mathcal{X} \times \mathcal{X}} c(x, y) d\pi(x,y),
\end{align}
where $\Pi(\mu, \nu)$ denotes the set of joint distributions with marginals $\mu$ and $\nu$. When $c(x,y) = \|x - y\|^p$ for $p \ge 1$, the quantity $\mathrm{OT}_c(\mu, \nu)^{1/p}$ defines the $p$-Wasserstein distance, denoted $W_p(\mu, \nu)$ \cite{villani2009optimal}.

\paragraph{Dual and Semi-Dual Formulations.}
The Kantorovich problem admits a dual formulation involving continuous potential functions $f, g \in C(\mathcal{X})$:
\begin{equation}\label{eq:dual_ot}
    \mathrm{OT}_c(\mu, \nu) = \sup_{\substack{f, g \\ f(x) + g(y) \le c(x,y)}} \int f  d\mu + \int g  d\nu.
\end{equation}
By defining the $c$-transform of a potential $g$ as $g^c(x) := \inf_{y} [c(x, y) - g(y)]$, one can eliminate the constraint $f(x) \le c(x,y) - g(y)$ by setting $f = g^c$. This yields the \textbf{semi-dual formulation}, which depends on a single potential $g$:
\begin{equation}\label{eq:semi_dual}
    \mathrm{OT}_c(\mu, \nu) = \sup_{g \in C(\mathcal{X})} \int g^c  d\mu + \int g  d\nu.
\end{equation}
This semi-dual form is crucial for our analysis as, even if $\mu$ is continuous, having $\nu$ discrete converts the constrained continuous problem into a finite dimension problem, a property we will leverage to construct our error estimator. 

\subsection{Entropic Regularization and Sinkhorn Divergences}
To overcome the computational cost of exact OT, one commonly employs Entropic Regularization. We define the regularized objective in its primal form by adding a Kullback-Leibler penalty:
\begin{align}\label{def::entropic_ot}
    \mathrm{OT}_{c, \varepsilon}(\mu, \nu) := \min_{\pi \in \Pi(\mu, \nu)} \int c  d\pi + \varepsilon \mathrm{KL}(\pi | \mu \otimes \nu),
\end{align}
where $\varepsilon > 0$ is the regularization strength. To correct for the bias introduced by entropy (where $\mathrm{OT}_{c, \varepsilon}(\mu, \mu) \neq 0$), we can use the the \textbf{Sinkhorn Divergence}:
\begin{equation}\label{def::sinkhorn_div}
    S_{\varepsilon}(\mu, \nu) := \mathrm{OT}_{c, \varepsilon}(\mu, \nu) - \frac{1}{2}\mathrm{OT}_{c, \varepsilon}(\mu, \mu) - \frac{1}{2}\mathrm{OT}_{c, \varepsilon}(\nu, \nu),
\end{equation}
which, for $c = \|\cdot\|^2 $, is a divergence that interpolates between OT (as $\varepsilon \to 0$) and Maximum Mean Discrepancy (as $\varepsilon \to \infty$) \cite{genevay2018learning, feydy2019interpolating} .

\subsection{Statistical Rates and Intrinsic Dimension}
When approximating $\mu$ and $\nu$ via empirical measures $\hat{\mu}_n$ and $\hat{\nu}_n$, the discretization error was classically shown to scale with the ambient dimension $d$. Informally, for $p$-Wasserstein costs and Sinkhorn Divergences:
\begin{align*}
    \mathbb{E}[|W_p^p(\hat{\mu}_n, \hat{\nu}_n) - W_p^p(\mu, \nu)|] &= \mathcal{O}(n^{-p/d}), \\
    \mathbb{E}[|S_{\varepsilon}(\hat{\mu}_n,\hat{\nu}_n) - S_{\varepsilon}(\mu,\nu) |] &= \mathcal{O}(n^{-1/2}\varepsilon^{-d/2}).
\end{align*}
Crucially, subsequent work established that $d$ is not the ambient dimension, but the \textbf{minimum intrinsic dimension} of the two measures, both for OT \cite{WeedSharpRates, hundrieser2024empirical} and EOT \cite{groppe2024lower, stromme2024minimum}. More precisely, the rate tracks the $\epsilon$-covering number $\mathcal{N}_\epsilon$, defining an effective dimension $d_\epsilon \approx \min_{\rho \in {\mu, \nu}}\frac{\log \mathcal{N}_\epsilon(\rho)}{-\log \epsilon}$. This quantity exhibits a \textit{multi-resolution behavior}: OT and EOT adapt to the geometry at varying scale of observation, capturing $d_\varepsilon$ in the non-asymptotic regime where the discretization error is still significant. This allows the estimator to exploit the effective geometry at coarse scales, in regimes where $d_\epsilon > \lim_{\varepsilon \to 0} d_\epsilon$.

\section{A Simple Estimator of the Wasserstein Discretization Error}
\label{sec:estimator}

As discussed in Section \ref{sec:preliminaries}, the reliability of OT in machine learning hinges on controlling the error incurred when replacing continuous measures with discrete samples. A fundamental insight into this error comes from the triangle inequality of the Wasserstein distance. For any two measures $\mu, \nu$ and their discrete approximations $\hat{\mu}, \hat{\nu}$, the estimation error is bounded by: 
\begin{equation*}
    |W_p(\mu, \nu) - W_p(\hat{\mu}, \hat{\nu})| \leq W_p(\mu, \hat{\mu}) + W_p(\nu, \hat{\nu}).
\end{equation*}
This decomposition reveals that the total error is governed by how well discrete samples capture the geometry of the underlying distributions. While asymptotic theory predicts a rate of $n^{-1/d}$, these bounds offer limited practical guidance for finite datasets. Estimating the term $\mathrm{OT}_c(\rho, \hat{\rho})$ for $\rho= \mu, \nu$ directly from data serves two critical purposes: (i) it provides a concrete, non-asymptotic quantification of the approximation quality; and (ii) the rate at which this error decays with sample size acts as an indicator of the \textit{intrinsic dimension} of the measure.

In this section, we introduce a fast, solver-free estimator for this one-sample discretization error $\mathrm{OT}_c(\rho,\hatrho^*)$, where $\hatrho^*$ represents the optimal discrete approximation of $\rho$ given a fixed support of $n$ points.

\subsection{Wasserstein discretization error and semi-discrete OT}

Consider a measure $\rho$ and a fixed support set $X = (X_1, ..., X_n)$ sampled i.i.d from $\rho$. For any weight vector $\bfa \in \Delta_n$, let $\hatrhoa = \sum_{i=1}^n a_i \delta_{X_i}$ be the corresponding discrete measure. The computation of $\mathrm{OT}_c(\rho,\hatrhoa)$ constitutes a semi-discrete optimal transport problem. Even with continuous $\rho$, this problem admits a finite-dimensional semi-dual formulation \eqref{eq:semi_dual}:
\begin{align}\label{eq::semi_dual_semi_discrete}
    \mathrm{OT}_c(\rho,\hatrhoa)\! = \!\max_{\bfg \in \RR^n} \bigg[ 
    \int_{\RR^d} \bfg^c(x) \d\rho(x) \!+\! \sum_{i=1}^n a_i g_i\bigg].  
\end{align}
Solving \eqref{eq::semi_dual_semi_discrete} typically requires expensive stochastic gradient descent or semi-discrete solvers \cite{genevay2018learning, MerigotNewtonSemiDiscrete}, making it impractical if the goal is merely to evaluate the quality of the discretization when one uses $\OT_c(\hat{\mu}_n, \hat{\nu}_n)$ as a computationally tractable proxy of $\OT_c(\mu, \nu)$ . Remarkably, the following proposition demonstrates that we can bypass optimization entirely. By selecting the optimal weights for the fixed support, the dual potential collapses to the zero vector, yielding a closed-form solution.

\begin{proposition}\label{prop::weight_opti}
    Let $c$ be a continuous cost. Given $\rho \in \P(\RR^d)$ and a support $X=(x_1,\ldots,x_n)\in(\RR^d)^n$, define the weights $\bfw_n=(w_1,\ldots,w_n)^\top$ by
    \[
        w_i \;=\; \rho\bigl(\{x \in \RR^d ;  \min_{j} c(x,x_j)=c(x,x_i)\}\bigr),
    \]
     and set $\hat\rho^*_n=\sum_{i=1}^n w_i \delta_{x_i}$. Then $\hat\rho_n^*$ provides the best approximation of $\rho$ supported on $X$, that is,
    \[\hat
        \rho_n^* \in \argmin_{\bfa\in\Delta_n}\Big\{ \OT_c(\rho,\hat\rho_n^\bfa)\ \big|\ \hat\rho_n^\bfa=\sum_{i=1}^n a_i \delta_{x_i}\Big\},
    \]
    and the minimizer is unique. Moreover, the zero vector $\mathbf{0}_n$ is the maximizer of the semi-dual \eqref{eq::semi_dual_semi_discrete} at $\hat\rho_n^*$. That is, 
    \[
        \OT_c(\rho,\hat\rho^*_n) \;=\; \int_{\RR^d} \mathbf{0}_n^{c}(x) d\rho(x).
    \]
\end{proposition}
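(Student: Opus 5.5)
The plan is to sandwich $\OT_c(\rho,\hat\rho^\bfa_n)$ between two quantities that both equal $\int \bfzero_n^c\,\d\rho$, where $\bfzero_n^c(x)=\min_j c(x,x_j)$.

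\textbf{Step 1 (universal lower bound).} For every $\bfa\in\Delta_n$, the semi-dual \eqref{eq::semi_dual_semi_discrete} is a supremum over $\bfg\in\RR^n$. Evaluating it at $\bfg=\bfzero_n$ gives
\[
\OT_c(\rho,\hat\rho^\bfa_n)\;\ge\;\int \bfzero_n^c\,\d\rho+\sum_i a_i\cdot 0\;=\;\int \min_j c(x,x_j)\,\d\rho(x).
\]
The same bound also follows from the primal: any coupling $\pi\in\Pi(\rho,\hat\rho^\bfa_n)$ satisfies $c(x,y)\ge \min_j c(x,x_j)$ on its support, because $y\in X$.

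\textbf{Step 2 (attainment at $\bfw_n$).} Choose a measurable selection $T(x)\in\argmin_j c(x,x_j)$, for instance the smallest minimizing index; this is measurable since $c$ is continuous. Set $\pi=(\mathrm{id},x_{T})_\#\rho$. Its cost is exactly $\int \min_j c(x,x_j)\,\d\rho$. Its second marginal is $\sum_i \rho(T^{-1}(i))\delta_{x_i}$.

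The Laguerre/Voronoi cells in the definition of $w_i$ may overlap on ties, in which case the $w_i$ as written need not sum to one. I will therefore read $w_i$ as the mass of the tie-broken cell $T^{-1}(i)$, which matches the stated definition whenever ties are $\rho$-null. I will remark on this explicitly in the proof.

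With this reading, $\pi\in\Pi(\rho,\hat\rho^*_n)$, and so $\OT_c(\rho,\hat\rho^*_n)\le\int\bfzero_n^c\,\d\rho$. Combined with Step 1, this gives equality. It follows that $\hat\rho^*_n$ minimizes over $\Delta_n$, that $\bfzero_n$ attains the semi-dual supremum at $\bfa=\bfw_n$, and that the displayed identity holds.

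\textbf{Step 3 (uniqueness).} This is the main obstacle. Suppose $\OT_c(\rho,\hat\rho^\bfa_n)=\int\min_j c(x,x_j)\,\d\rho$, and let $\pi$ be an optimal coupling, which exists by compactness and continuity of $c$. Then $\int\bigl[c(x,y)-\min_j c(x,x_j)\bigr]\,\d\pi=0$ with a nonnegative integrand. Hence $\pi$ is concentrated on $\{(x,x_i): c(x,x_i)=\min_j c(x,x_j)\}$, so $\pi$ sends each point only to its nearest support points.

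If ties are $\rho$-negligible, then $\pi=(\mathrm{id},x_T)_\#\rho$ is forced, and so $\bfa=\bfw_n$. Uniqueness therefore needs an assumption such as $\rho(\{x: \text{two indices attain } \min_j c(x,x_j)\})=0$. This holds, for instance, when $\rho$ is absolutely continuous and $c=\|\cdot\|^p$, since the bisector hyperplanes are Lebesgue-null. I will state this hypothesis in the proof, noting that without it uniqueness can fail: for $\rho$ a Dirac mass equidistant from two support points, every split of the mass between them is optimal.

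The existence and optimality parts (Steps 1 and 2) are routine. The delicate points are the tie handling, both in the definition of $\bfw_n$ and in uniqueness, and the measurability of the selection $T$.
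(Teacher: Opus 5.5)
Your argument is correct, but it reaches the result by a different mechanism than the paper.

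\textbf{The paper's route.} The paper identifies $\mathbf{0}_n$ as a maximizer through the first-order condition of the concave semi-dual $H(\bfg)=\int \bfg^c\,\d\rho+\sum_i a_i g_i$. Since $\partial_i H(\bfg)=a_i-\rho(\{\bfg^c(x)=c(x,x_i)-g_i\})$, the condition $\nabla H(\mathbf{0}_n)=\mathbf{0}_n$ is literally the definition of $w_i$. It then reads off the nearest-neighbor plan. For uniqueness it argues, rather informally, that any other weight vector has a non-constant optimal potential, hence a plan sending positive mass away from nearest neighbors, hence a strictly larger cost.

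\textbf{Your route.} You use a weak-duality sandwich instead. Evaluating the semi-dual (or simply the primal) at $\bfg=\mathbf{0}_n$ gives the lower bound $\int\min_j c(x,x_j)\,\d\rho$ simultaneously for every $\bfa\in\Delta_n$. The explicit coupling $(\mathrm{id},x_T)_\#\rho$ attains this bound. This gives, in one stroke, minimality over $\Delta_n$, optimality of $\mathbf{0}_n$, and the displayed identity.

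\textbf{What each buys.} Your route needs neither differentiability of $H$ nor any description of optimal potentials for other weights. The paper's gradient formula is only valid when $\rho$ does not charge the cell boundaries, so its proof tacitly uses exactly the tie-nullity hypothesis you make explicit. Your uniqueness step, where zero gap forces $\pi$ onto the nearest-neighbor graph, is the rigorous form of the paper's last sentence. Your Dirac example correctly shows that this hypothesis cannot be dropped, both for $\bfw_n\in\Delta_n$ to hold and for uniqueness. The paper's route has the merit of tying the result to the semi-discrete solver picture, where the gradient of the semi-dual is the mismatch between prescribed weights and Laguerre-cell masses. Yours is more elementary, and its existence part works for arbitrary $\rho$ once ties are broken measurably.

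\textbf{One small correction.} In your example with $c=\|\cdot\|^p$ and absolutely continuous $\rho$, you also need the $x_i$ to be pairwise distinct. If $x_i=x_j$, the ``bisector'' is all of $\RR^d$. Distinctness is automatic almost surely for i.i.d.\ samples from such a $\rho$, but the proposition allows an arbitrary $X\in(\RR^d)^n$.
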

\begin{proof}
    By definition of $\bfw_n$, we have that the vector $\mathbf{0}_n$ is the Kantorovich potential of the semi-discrete OT problem between $\rho$ and $\rho_n$. Indeed, the first-order optimality condition, stating that $\bfg$ is optimal if and only if $\nabla H(\bfg) = \mathbf{0}_n$ reads $\rho(\{ \bfg^c(x) = c(x, X_i) - g_i \}) = w_i$ which is exactly how we defined the weights $w_i$ fixing $g_i=0$, $i\in \llbracket 1,n \rrbracket$.  

    Since $\mathbf{0}_n$ is its Kantorovich potential, the optimal transport reads for all $x \in \RR^d : x \mapsto X_i = \argmin_{X_i}c(x, X_i)$.  That is, every point is sent to its closest neighbor from in $X$. For any other probability weight vector, another $\bfg' \in \RR^n$ such that $\bfg' \notin \{ \lambda \mathbf{1}_n , \lambda \in \RR \}$ will be an optimal vector, which will lead to an OT plan where a strictly positive mass of $\rho$ will not be sent to its nearest neighbor, therefore increasing the OT cost. 
\end{proof}

\subsection{A Solver-Free Monte Carlo Estimator}

The key insight of Proposition \ref{prop::weight_opti} is that for the optimally weighted measure $\hat\rho^*_n$, the semi-dual objective is maximized at $\mathbf{0}_n$. This simplifies the OT cost to the integral of the $c$-transform of the zero vector, which is simply the distance to the nearest neighbor in the support $X$. Consequently, we can approximate $\OT_c(\rho,\hat\rho^*_n)$ using a direct Monte Carlo (MC) estimator $\hat{\OT}_{N}^n$ using $N$ additional samples $X_1,\ldots,X_N \stackrel{\text{i.i.d.}}{\sim} \rho$:
\begin{align}\label{eq:estimator_def}
  \hat{\OT}_{N}^n := \frac{1}{N}\sum_{k=1}^N \mathbf{0}_n^{c}(X_k) \;=\;  \dfrac1N\sum_{k=1}^N\min_{j \in \llbracket 1,n \rrbracket}  c(X_k,x_j).
\end{align}
This estimator is computationally efficient, fully parallelizable on GPUs, and requires no OT solver. Furthermore, if explicit weights are required, they also admit a natural MC estimator:
\[
    w_i \;\approx\; \frac{1}{N}\sum_{k=1}^N \mathbf{1}\!\left\{\arg\min_{j} c(X_k,x_j)=i\right\} \;=:\; \hat w_{i,N}.
\]
The following theorem establishes that these estimators concentrate exponentially fast around their true values.

\begin{proposition}\label{prop:discretization_delta_prob}
    Suppose that $\rho$ has a bounded support and that the cost function is continuous.    Then for any $\delta > 0$, with probability at least $1 - \delta$, we have:
     \[
        \left| \OT_c(\rho, \hat \rho_n^*) - \hat{\OT}_N^n  \right|
        \le\underbrace{\sqrt{\tfrac{2\hat{\sigma}_{\bfzero^c}^2\log(2/\delta)}{N}} 
        + \tfrac{7C_\rho\log(2/\delta)}{3(N-1)}}_{E_{\text{MC,OT}}}\ ,
    \]
     
    where $C_\rho= \sup_{x,y \in \text{supp}(\rho) } c(x,y)$ and $\hat{\sigma}_{\bfzero^c}^2$ is the usual biased sample variance estimator $$\dfrac1N\sum_{k=1}^N\Big(\min_{j \in \llbracket 1,n \rrbracket}  c(X_k,x_j)-\hat{\OT}_{N}^n\Big)^2.$$  Also, for any $i \in \llbracket 1, n \rrbracket$, with probability $1-\delta$, we have
     
    \[
        \left| w_i - \hat{w}_{N,i} \right| \le \underbrace{\sqrt{ \tfrac{2 \hat{w}_{N,i} (1 -\hat{w}_{N,i}) \log(2/\delta)}{N} } 
    + \tfrac{7\log(2/\delta)}{3(N-1)}}_{E_{\text{MC},w_i}}\ .
    \]
\end{proposition}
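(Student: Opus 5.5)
The plan is to apply the empirical Bernstein inequality of Maurer and Pontil (2009) conditionally on the support $X=(x_1,\ldots,x_n)$. Their bound has exactly the form $\sqrt{2V_N\log(2/\delta)/N}+7\log(2/\delta)/(3(N-1))$ for i.i.d. variables in $[0,1]$, with $V_N$ the sample variance. Rescaling to $[0,C]$ and adjusting the variance normalization produces the displayed bounds.

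\textbf{Step 1 (identify the mean).} Conditionally on $X$, set $Z_k:=\mathbf{0}_n^c(X_k)=\min_j c(X_k,x_j)$ for $k=1,\ldots,N$. Because the $X_k$ are i.i.d. from $\rho$ and independent of $X$, the $Z_k$ are i.i.d. By Proposition~\ref{prop::weight_opti}, their common mean is $\EE[Z_k]=\int \mathbf{0}_n^c\,\d\rho=\OT_c(\rho,\hat\rho_n^*)$. Measurability of $Z_k$ is immediate, since $c$ is continuous and $Z_k$ is a finite minimum of continuous functions.

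\textbf{Step 2 (boundedness).} The $x_j$ lie in $\mathrm{supp}(\rho)$ (almost surely, as samples from $\rho$), and the $X_k$ lie in $\mathrm{supp}(\rho)$ almost surely. Since $c\ge 0$, this gives $0\le Z_k\le C_\rho$. The constant $C_\rho$ is finite because $c$ is continuous and the support is bounded, hence compact after taking the closure.

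\textbf{Step 3 (concentration).} Apply Maurer--Pontil to $Z_k/C_\rho\in[0,1]$ in its two-sided form, allocating $\delta/2$ to each side. Multiplying back by $C_\rho$ gives a deviation bound of the form $\sqrt{2V_N\log(2/\delta)/N}+7C_\rho\log(2/\delta)/(3(N-1))$, where $V_N$ is the sample variance of the $Z_k$. Two points need care here.
\begin{itemize}
\item Maurer--Pontil uses the unbiased variance $\frac{1}{N-1}\sum_k(Z_k-\bar Z)^2$, whereas the statement uses the biased $\hat\sigma^2_{\mathbf{0}^c}$. These differ by the factor $N/(N-1)$. I would either absorb this factor by slightly enlarging the lower-order term, or check whether the $N-1$ in the second term already accounts for it. I expect this constant bookkeeping to be the main obstacle: proving the stated constants exactly may require a small adjustment.
\item The bound holds conditionally on $X$ for every fixed $X$, so it also holds unconditionally by integrating over $X$.
\end{itemize}

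\textbf{Step 4 (weights).} Apply the same argument to $B_k:=\mathbf{1}\{\arg\min_j c(X_k,x_j)=i\}\in\{0,1\}$, which has mean $w_i$ by definition. This requires ties in the $\arg\min$ to be broken by a fixed deterministic rule, or to have $\rho$-measure zero, so that the tie-breaking convention agrees with the one defining $w_i$. For Bernoulli variables the sample variance is exactly $\hat w_{N,i}(1-\hat w_{N,i})$. Here the range is $[0,1]$, so no rescaling is needed, and the second bound follows directly.
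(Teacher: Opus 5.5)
Your proposal follows the paper's proof. You bound $\mathbf{0}_n^c=\min_j c(\cdot,x_j)$ by $C_\rho$, apply the Maurer--Pontil empirical Bernstein inequality (their Theorem~4) to the Monte Carlo average, and repeat the argument for the indicators defining $\hat w_{N,i}$. The conditioning on the support, the identification of the mean through Proposition~\ref{prop::weight_opti}, and the tie-breaking caveat are details the paper leaves implicit, and you are right to make them explicit.

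The constant bookkeeping you flag is a genuine issue. The paper's one-line proof does not resolve it either, and neither of your two proposed fixes gives the displayed constants.
\begin{itemize}
\item \textbf{Two-sidedness.} Theorem~4 of Maurer--Pontil is one-sided, and its $\log(2/\delta)$ already comes from an internal union bound (Bennett's inequality plus concentration of the sample variance). Allocating $\delta/2$ to each side therefore gives $\log(4/\delta)$, not $\log(2/\delta)$. Splitting $\delta$ between the two Bennett tails and a single variance event gives $\log(3/\delta)$, still not $\log(2/\delta)$.
\item \textbf{Variance normalization.} Their variance is the unbiased one, $\frac{N}{N-1}\hat\sigma^2_{\bfzero^c}$, so their first term equals $\sqrt{2\hat\sigma^2_{\bfzero^c}\log(2/\delta)/(N-1)}$. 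The $N-1$ in the second term does not absorb this factor. The slack in their argument is of order $C_\rho\log(2/\delta)/N^2$. The discrepancy $\hat\sigma_{\bfzero^c}\sqrt{2\log(2/\delta)}\,\bigl(\tfrac{1}{\sqrt{N-1}}-\tfrac{1}{\sqrt N}\bigr)$ is of order $\hat\sigma_{\bfzero^c}\sqrt{\log(2/\delta)}\,N^{-3/2}$, which in general dominates for large $N$.
\end{itemize}

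What the argument actually proves is, for instance, $\sqrt{2\hat\sigma^2_{\bfzero^c}\log(4/\delta)/(N-1)}+7C_\rho\log(4/\delta)/(3(N-1))$. The same holds for the weights, with $N-1$ in place of $N$ under the square root. For Bernoulli variables the variance Maurer--Pontil use is $\frac{N}{N-1}\hat w_{N,i}(1-\hat w_{N,i})$, not $\hat w_{N,i}(1-\hat w_{N,i})$. This changes nothing about how the estimator is used, but you should state the result with these constants rather than expect the displayed ones to come out.
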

\begin{proof}
    Under our assumptions,  we can bound a.s. $\mathbf{0}_n^c(\cdot)=\min_{j \in \llbracket 1,n \rrbracket}  c(\cdot,x_j)$  by  $C_\rho$. Therefore, the empirical Bernstein bound in Theorem 4 in \cite{MaurerPontil2009} yields the first inequality. The same theorem applies to the second inequality, since $w_i \in [0,1]$  for all i. 
\end{proof}

We conclude with two practical observations. First, although the bound depends on the diameter $C_\rho$, this term appears linearly and does not affect the convergence rate. Second, while standard practice often assumes uniform weights ($\hat{\rho}_n^{\text{unif}} = \frac{1}{m}\sum \delta_{x_i}$), our estimator $\hat{\OT}_{N}^n$ targets the optimally weighted measure $\hat{\rho}_n^*$. Since $\OT_c(\rho, \hat{\rho}_n^*) \leq \OT_c(\rho, \hat{\rho}_n^{\text{unif}})$, our method provides a tight lower bound on the discretization error achievable on the support $X$. More importantly, because $\hat{\rho}_n^*$ optimally adapts the mass to the geometry of the support, the decay rate of $\OT_c(\rho, \hat{\rho}_n^*)$ as $n$ increases is driven purely by the intrinsic geometry of $\rho$. This property makes $\hat{\OT}_{N}^n$ the key building block for our fast intrinsic dimension estimator, detailed in the next section.

\section{Discretization Error and Intrinsic Dimension Estimation}
\label{sec:dimension}

\subsection{Geometric setting: Multi-scale behavior and Covering Numbers}

A recurring theme in machine learning is that high-dimensional data often exhibits low-dimensional structure. While the \emph{manifold hypothesis}, positing that data concentrates on a smooth submanifold $\mathcal{M} \subset \RR^d$, provides a useful baseline, real-world data rarely adheres strictly to such idealized assumptions. Instead, complex distributions often display a multi-resolution behavior, where the effective geometric complexity depends on the scale of observation or the desired approximation accuracy.

To illustrate this, consider $\rho \in \P(\RR^{10})$ constructed as the following mixture: $\rho := \tfrac{1}{2}\Uc([0,1]^{2}\times\{0\}^8) + \tfrac{1}{2}\Uc([1,2]^{8}\times\{0\}^2)$. Strictly speaking, the support has a topological dimension of 8. However, the behavior of the optimal transport discretization error reveals a more nuanced reality. If one aims for a coarse approximation (e.g., a discretization error of 50\%), the problem may behave effectively like a 2-dimensional problem. Conversely, demanding a refined accuracy (e.g., less than 10\% error) forces the discretization to fill the 8-dimensional volume, causing the error rate to suffer from the curse of dimensionality associated with $d=8$.

This example highlights that for probability measures, the relevant notion of complexity is not a static integer but a dynamic quantity related to the \emph{covering numbers} of the support at different radii \cite{WeedSharpRates}. Consequently, we replace rigid geometric definitions (such as the manifold hypothesis) with a functional assumption based on the convergence rate of the Wasserstein distance itself.

We assume that within a specific range of sample sizes $n \in [n_{\min}, n_{\max}]$, the discretization error is governed by an effective intrinsic dimension $\dint$.

\begin{assumption}[Effective Intrinsic Dimension]
\label{assum:rate}
Let $\rho \in \mathcal{P}(\RR^d)$ be a probability measure with finite diameter. We assume that for $n \in [n_{\min}, n_{\max}]$, the convergence rate of the 1-Wasserstein discretization error is governed by a dimension parameter $s = \dint$. Specifically, there exist constants $C_1, C_2 > 0$ depending essentially on $\text{Diam}(\rho)$, such that $C_2\le c C_1$ for a universal constant $c\geq 1$, and:

 \textbf{Lower bound (Quantization limit):} For any discrete measure $\sigma_n$ supported on at most $n$ points, the approximation error is lower-bounded by the dimension:
    \begin{equation}
        W_1(\rho, \sigma_n) \ge C_1 n^{-1/s}.
    \end{equation}
\textbf{Upper bound (Statistical performance):} The empirical measure $\hat{\rho}_n$ constructed from $n$ i.i.d. samples satisfies:
    \begin{equation}
        \mathbb{E}[W_1(\rho, \hat{\rho}_n)] \le C_2 n^{-1/s'}, 
    \end{equation}
for all $s' \geq s$. We refer to the exponent $s$ as the \textbf{intrinsic dimension} $\dint$ of $\rho$ in the regime $[n_{\min}, n_{\max}]$.
\end{assumption}

This assumption posits that $\dint$ captures the "difficulty" of the Wasserstein discretization error of $\rho$ with a finite number of points. In the asymptotic limit ($n_{\min} \to \infty$), if $\rho$ admits a density with respect to a manifold of dimension $d_{\mathcal{M}}$, then $s$ coincides with $d_{\mathcal{M}}$. We refer to the appendix for further discussion, and bounds on the constants in specific settings. 

\subsection{Leveraging our discretization estimator to estimate the intrinsic dimension.}

To estimate the intrinsic dimension, we leverage the scaling behavior of the one-sample discretization error established in Assumption \ref{assum:rate}. Using our Monte Carlo estimator $\hat{\OT}_N^n$ (with $c(x,y)=\|x-y\|$) defined in Section \ref{sec:estimator}, we evaluate the error at two distinct support sizes, $n$ and $\eta n$ (with $\eta > 1$). The ratio of these estimates reveals the exponent governing the convergence rate, leading to the following estimator:
\[
\hat{d}^* \;:=\; \frac{\log \eta}{\log \hat{\OT}_N^n - \log \hat{\OT}_N^{\eta n}},
\]
where $N$ denotes the number of samples used in the MC approximation. Evaluating $\hat{d}^*$ reduces to computing two MC estimates, yielding a linear complexity in $n$. The statistical guarantee of our method is given in the following theorem.

\begin{theorem}\label{th::dim_estim}
    Suppose that Assumption \ref{assum:rate} holds and that $\dint > 2$ and let $\text{Diam}(\rho)$ denote the diameter of $\rho$'s support. Then, for $\gamma, \xi > 0 $ and $n\eta \le n_{\max}$, using
    \begin{align*}
        \eta &\ \geq\  \Big(\frac{(1+\xi)C_2}{C_1}\Big)^{\frac{2\dint}{\gamma}}\ ,\\
        n &\ \geq\  \left(\frac{\mathrm{Diam}(\rho)^2}{2\xi^2C_2}\log(4/\delta)\right)^{\frac{\dint}{\dint - 2}}\ ,\\
        N &\ \geq\ 
        \left[
        \frac{16\,\dint^2}{\gamma^2(\log\eta)^2}
        \; \vee\;1    
        \right]\,\frac{2\,\mathrm{Diam}(\rho)^2}{C_1^2}
        \eta^{2/\dint}n^{2/\dint}
        \log\!\Big(\frac{8}{\delta}\Big)\ ,
    \end{align*}
    we have with probability $1 - \delta$ 
    \begin{align*}
        \frac{\dint}{1 + \gamma} \leq \hat{d}^* \leq (1 + \gamma)\dint \ . 
    \end{align*}
\end{theorem}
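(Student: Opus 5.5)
The plan is to sandwich the population quantity $E_n := \OT_c(\rho,\hat\rho_n^*)$, with $c=\|\cdot\|$ so that $E_n = W_1(\rho,\hat\rho_n^*)$, between deterministic powers of $n$ on a high-probability event. Then we transfer this to $\hat\OT_N^n$ through Proposition \ref{prop:discretization_delta_prob}, and finally control the log-ratio defining $\hat d^*$.

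\textbf{Step 1: lower bounds.} Since $\hat\rho_n^*$ is supported on at most $n$ points, the lower bound of Assumption \ref{assum:rate} gives deterministically
\[
E_n \ge C_1 n^{-1/\dint}, \qquad E_{\eta n}\ge C_1(\eta n)^{-1/\dint}.
\]

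\textbf{Step 2: upper bounds.} By Proposition \ref{prop::weight_opti}, $E_n\le W_1(\rho,\hat\rho_n)$ for the uniformly weighted empirical measure on the same support. The map $(X_1,\dots,X_n)\mapsto W_1(\rho,\hat\rho_n)$ has bounded differences $\mathrm{Diam}(\rho)/n$, so McDiarmid's inequality together with the upper bound of the assumption yields, with probability $1-\delta/4$,
\[
W_1(\rho,\hat\rho_n)\le C_2 n^{-1/\dint}+\mathrm{Diam}(\rho)\sqrt{\log(4/\delta)/(2n)}.
\]
The condition on $n$ (exponent $\dint/(\dint-2)$, which uses $\dint>2$) is designed so that the fluctuation term is at most $\xi C_2 n^{-1/\dint}$; I would verify that it reduces to exactly $n^{1-2/\dint}\ge \mathrm{Diam}^2\log(4/\delta)/(2\xi^2C_2^2)$, possibly up to the constant convention. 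The same bound holds at $\eta n\ge n$. Hence $E_m\le(1+\xi)C_2 m^{-1/\dint}$ for $m\in\{n,\eta n\}$.

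\textbf{Step 3: Monte Carlo error.} Apply Proposition \ref{prop:discretization_delta_prob}, or directly Hoeffding's inequality since $\mathbf 0^c\le \mathrm{Diam}(\rho)$, at level $\delta/4$ for each of $m=n$ and $\eta n$. This gives $|\hat\OT_N^m - E_m|\le \mathrm{Diam}(\rho)\sqrt{2\log(8/\delta)/N}$. The requirement on $N$ makes this at most $\tau E_m$ with $\tau=\min\{1,\gamma\log\eta/(4\dint)\}$, because $E_m\ge C_1(\eta n)^{-1/\dint}$ from Step 1. Then $|\log\hat\OT_N^m-\log E_m|\lesssim \tau$, and some care is needed at $\tau$ near $1$; I would use $|\log(1+u)|\le 2|u|$ for $|u|\le1/2$ and adjust constants.

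\textbf{Step 4: combine.} A union bound over the four events gives probability $1-\delta$. On this event, Steps 1 and 2 give
\[
\log E_n-\log E_{\eta n}\in\Big[\tfrac{\log\eta}{\dint}-\log\tfrac{(1+\xi)C_2}{C_1},\ \tfrac{\log\eta}{\dint}+\log\tfrac{(1+\xi)C_2}{C_1}\Big].
\]
The condition on $\eta$ gives $\log\frac{(1+\xi)C_2}{C_1}\le \frac{\gamma\log\eta}{2\dint}$, and the MC perturbation adds at most $\frac{\gamma\log\eta}{2\dint}$ in total. So the denominator of $\hat d^*$ lies in $\frac{\log\eta}{\dint}[1-\gamma,1+\gamma]$, which gives $\hat d^*\ge \dint/(1+\gamma)$. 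The upper side gives $\dint/(1-\gamma)$, which must be converted to $(1+\gamma)\dint$. I expect this bookkeeping to be the main obstacle. I would first try to tighten the allocation so the relative error of the denominator is at most $\gamma/(1+\gamma)$, and otherwise accept a rescaling of $\gamma$ absorbed into the stated constants (the factors $16$ and $2$). The concentration in Step 2 also requires checking the exact constant in the $n$ condition.
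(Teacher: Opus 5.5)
Your route is the paper's, step for step. Both use deterministic lower bounds from Assumption~\ref{assum:rate} at $n$ and $\eta n$, a high-probability upper bound $E_k\le(1+\xi)C_2k^{-1/\dint}$, Hoeffding for the Monte Carlo error, a perturbation bound on the log-ratio, and a union bound.

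Your Step 2 is better justified than the paper's. The paper asserts the tail bound $\PP(W_k^*\ge C_2k^{-1/\dint}+t)\le\exp(-2kt^2/\mathrm{Diam}(\rho)^2)$ for $W_k^*=E_k$ directly from $\mathbf{0}^c\le\mathrm{Diam}(\rho)$. But $E_k$ does not have bounded differences $\mathrm{Diam}(\rho)/k$: replacing one support point changes the nearest-neighbour cost on its whole Voronoi cell, whose mass need not be $O(1/k)$. Your domination $E_k\le W_1(\rho,\hat\rho_k)$ followed by McDiarmid gives exactly that inequality.

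Your $n$ condition with $C_2^2$ is the one the paper's proof actually uses; the $C_2$ in the statement is a typo. Step 1 also needs $n\ge n_{\min}$, which the statement leaves implicit.

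In Step 3, use the sharp radius $\kappa_N=\mathrm{Diam}(\rho)\sqrt{\log(8/\delta)/(2N)}$ rather than $\mathrm{Diam}(\rho)\sqrt{2\log(8/\delta)/N}$, since the $N$ condition is calibrated to it. Writing $D_0:=\log\eta/\dint$, the sharp radius bounds both relative errors by $\min\{\tfrac12,\gamma D_0/8\}$. That gives positivity of both estimates and a total log perturbation at most $\gamma D_0/2$. With your radius, which is looser by a factor of $2$, you only get relative error $\le 1$ and a perturbation budget of $\gamma D_0$.

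The obstacle you flag at the end is a genuine gap, and the paper does not close it either. Its proof concludes $\dint/(1+\gamma)\le\hat d^*\le\dint/(1-\gamma)$ and stops. Neither of your fixes works under the stated hypotheses:

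\begin{itemize}
\item The $\eta$ condition has no slack: it gives exactly $\log\frac{(1+\xi)C_2}{C_1}\le\frac{\gamma}{2}D_0$. Even with zero Monte Carlo error you would need $\frac{\gamma}{2}\le\frac{\gamma}{1+\gamma}$, i.e.\ $\gamma\le 1$.
\item Adding the Monte Carlo term with the one-sided bounds $-\log(1-u)\le 2u$ and $\log(1+u)\le u$ gives $D_0-\hat D\le\frac78\gamma D_0$. This yields $\hat d^*\le(1+\gamma)\dint$ only for $\gamma\le 1/7$.
\item For $\gamma\ge 8/7$ the bounds no longer even guarantee $\hat D>0$, so the argument loses the lower bound as well.
\item A rescaling of $\gamma$ cannot be absorbed into the factors $16$ and $2$, because it also changes the exponent $2\dint/\gamma$ in the $\eta$ condition.
\end{itemize}

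What your argument (and the paper's) actually proves is one of two things. Either the two-sided bound with $\dint/(1-\gamma)$ for $\gamma<1$. Or the stated conclusion for every $\gamma>0$, once $\gamma$ is replaced by $\gamma':=\gamma/(1+\gamma)$ in the conditions on $\eta$ and $N$: then $1-\gamma'=1/(1+\gamma)$ and $\gamma'\le\gamma$.
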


\textbf{Comparison to the literature.} 
The concept of leveraging the multi-scale decay of Wasserstein error for dimension estimation was pioneered in \cite{block2022intrinsic}. However, their estimator relies on the two-sample empirical distance:
\[
\widehat d \;=\; \frac{\log \eta}{\log W_1(\hat\rho_n,\hat\rho_n') - \log W_1(\hat\rho_{\eta n},\hat\rho_{\eta n}')}\ ,
\]
where $\hat\rho_n$ and $\hat\rho_n'$ are independent empirical measures of $n$ points. This approach rests on the fact that $\EE[ W_1(\hat\rho_n,\hat\rho_n')] \approx \EE[W_1(\hat\rho_n,\rho)]$, but it faces a severe computational bottleneck: evaluating $W_1(\hat\rho_n,\hat\rho_n')$ requires solving a full discrete optimal transport problem. 
Standard Linear Programming solvers scale as $\mathcal{O}(n^3)$, while approximate Sinkhorn solvers scale as $\mathcal{O}(n^2/\varepsilon)$. This prohibitive cost restricts such estimation to small datasets or requires substantial approximations that may bias the dimension estimate.

In contrast, our estimator $\hat{d}^*$ relies on the one-sample error to the optimal support, which we approximate via Monte Carlo without any optimization. The resulting complexity is linear $\mathcal{O}(n)$, allowing for dimension estimation on large-scale datasets where standard OT solvers are intractable.

\subsection{Experiments}

We compare our estimator, denoted as Semi-Discrete $W_1$, against the Discrete $W_1$ estimator based on \cite{block2022intrinsic} and standard baselines from the \texttt{scikit-dimension} package \cite{bac2021scikit}: MLE, CorrInt, lPCA, TwoNN.

\begin{figure}[ht]
    \centering
    \includegraphics[width=0.5\linewidth]{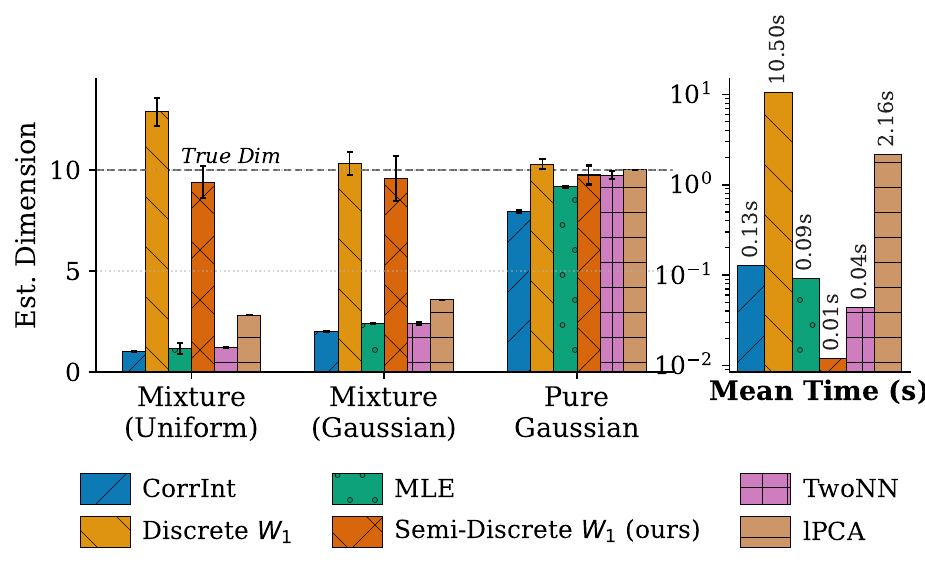}
    \caption{Intrinsic Dimension Estimation Benchmark. Comparison of our Semi-Discrete $W_1$ estimator against the Discrete $W_1$ baseline and standard geometric estimators. The dashed line represents the ground truth effective dimension $\dint = 10$. Our estimator matches the robustness of discrete OT on mixtures but runs orders of magnitude faster ($<0.1$s vs $\sim 10$s).}
    \label{fig:benchmark_dim}
\end{figure}

\textbf{Experimental Setup:} We generate data in $\mathbb{R}^{20}$ under three configurations designed to have an effective Wasserstein dimension of $\dint = 10$: (1) a mixture of hypercubes ($80\%$ 2D, $20\%$ 10D); (2) a mixture of low-rank Gaussians ($80\%$ rank 2, $20\%$ rank 10); and (3) a single rank-10 Gaussian. OT-based estimators compare supports of size $n=2000$ and $\eta n=3000$, while baselines use a fixed size of 3000. Experiments were repeated 20 times.

Figure \ref{fig:benchmark_dim} highlights the performance differences. While all methods succeed on the simple manifold (Config 3), standard geometric estimators fail to capture the global transport complexity of the mixtures (Configs 1 \& 2). In contrast, both OT-based methods robustly recover $\dint \approx 10$. Crucially, our Semi-Discrete estimator is drastically more efficient than the Discrete Wasserstein-based estimator: it computes the estimate in under 0.1 seconds, compared to $\simeq$ 10 seconds on GPU for the Discrete $W_1$ baseline which requires solving exact OT problems. For reference, CPU run times for \texttt{scikit-dimension} baselines are also provided.

\section{Coupled Bias Reduction: The Diagonal Richardson Estimator}

Sinkhorn divergence serves as a fast proxy for OT, supported by efficient solvers like \texttt{geomloss} \cite{feydy2019interpolating} which offer linear memory usage and high GPU performance, provided $\varepsilon$ is not vanishingly small. However, estimating the unregularized Optimal Transport cost from finite samples with the Sinkhorn divergence presents a dual challenge: the estimator is biased by both the regularization parameter $\varepsilon$ (entropic bias) and the finite sample size $n$ (statistical bias).  To further reduce the entropic bias, \cite{chizat2020faster} introduced a Richardson Extrapolation scheme on $\varepsilon$, proposing the estimator $\hat{R}_{\varepsilon, n} = 2\hat{S}_{\varepsilon, n} - \hat{S}_{\sqrt{\varepsilon},n}$. Although this approach theoretically improves the approximation error to $\mathbb{E}[|\hat{R}_{\varepsilon, n} - W_2^2 |] = o(n^{-2/(d+8)})$ (with $\varepsilon \asymp n^{-1/(d+8)}$). The first improvement is naturally to replace $d$ in $\varepsilon$ with our estimator $\widehat d^*$ of $d_{\text{int}}$. Then, even if the sample approximation is not the asymptotic limiting factor, its large constant pre-factor can render entropic debiasing ineffective in practice by targeting a term that is not the dominant source of error.

To better tackle both the entropic and statistical bias, we propose a \textit{coupled} strategy—the \textbf{Diagonal Richardson Estimator}—which links the regularization schedule $\varepsilon$ directly to the sample size $n$. By tackling both biases simultaneously, this method leverages our intrinsic dimension estimator to calibrate the debiasing optimally.

\paragraph{Total bias decomposition.}

It is well known in statistical OT that the entropic regularization should decay with the number of samples; i.e., $\varepsilon_n \asymp n^{-a}$ for some exponent $a > 0$. The total expected error decomposes into:
\begin{align*}
    \mathbb{E} [\hat{S}_{\varepsilon_n, n}] - W_2^2 
    &= \underbrace{\left( \mathbb{E}[\hat{S}_{\varepsilon_n, n}] - S_{\varepsilon_n} \right)}_{\text{Statistical Bias}} 
    + \underbrace{\left( S_{\varepsilon_n} - W_2^2 \right)}_{\text{Entropic Bias}}.
\end{align*}

We characterize the asymptotic expansions of both terms. The entropic bias expansion \eqref{eq::bias_1} follows from Proposition 4 in \cite{chizat2020faster} (assuming finite Fisher information of the measures), while the statistical bias expansion \eqref{eq::bias_2} is derived from Theorem 2 in \cite{stromme2024minimum}, which establishes the lower complexity adaptation of the Sinkhorn divergence:
\begin{align}
    \text{Bias}_{\text{ent}}(n) &= C_3 n^{-2a} + o_{\text{ent}}(n^{-2a})\ , \label{eq::bias_1} \\
    \text{Bias}_{\text{stat}}(n) &\leq C_4 n^{-(1 - a \dint)/2} + o_{\text{stat}}(n^{-(1 - a \dint)/2}) \ . \label{eq::bias_2}
\end{align}

\begin{remark}
    The constant governing the statistical bias is formally of the form $c(\varepsilon)n^{-1/2}$, depending on the second derivative of the Sinkhorn divergence. The upper bound $c(\e) \lesssim \varepsilon^{-\dint/2}$ was obtained independently in \cite{stromme2024minimum, groppe2024lower}, Example 5, through the rate $n^{-(1 - a d_{\epsilon})/2}$ when $\varepsilon \asymp n^{-a}$ on the standard error. This rate is solely governed by the statistical bias since $\hat{S}_{\varepsilon, n}- \mathbb{E} [\hat{S}_{\varepsilon, n}] =\mathcal{O}_\P(1/\sqrt n)$ uniformly in $\varepsilon \ge 0$ in Proposition 4 of \cite{chizat2020faster}. The sharpness of the rate of the statistical bias was not perfectly established, though discussions are provided for our case of interest $\varepsilon = n^{-a}$ and $d_{\epsilon}$ replaced by $\dint$. Therefore, to be completely rigorous, we use the equality in \eqref{eq::bias_2} as an assumption, as it is compulsory for our Richardson method to effectively cancel the bias term.
\end{remark}

\begin{assumption}[Sharpness of Statistical Bias]
\label{assum:sharpness}
    We assume that the statistical estimation error of the Sinkhorn divergence admits a sharp asymptotic expansion. That is, equality holds in \eqref{eq::bias_2}:
    \[
        \mathbb{E}[\hat{S}_{\varepsilon_n, n}] - S_{\varepsilon_n} \;=\; C_4 n^{-(1 - a \dint)/2} + o_{\text{stat}}(n^{-(1 - a \dint)/2})\ ,
    \]
    where $C_4 \neq 0$ and $\dint$ is the intrinsic dimension of the data.
\end{assumption}

\paragraph{Optimal schedule and the Diagonal Estimator.}

To construct an efficient estimator, we seek a rate $\e \asymp n^{-a}$ that balances the convergence rates of the entropic and statistical biases. Equating the exponents $4a = 1 - a \dint$ yields the optimal decay rate $a = \frac{1}{\dint+4}$. Under this schedule, a \textit{diagonal} Richardson extrapolation can eliminate the first-order terms of both biases simultaneously.

We define our Diagonal Richardson Estimator using sample sizes $n$ and $2n$ as:
\[
    \hat{R}^{\text{Diag}}_{2n} := w_{2n} \hat{S}_{\varepsilon_{2n}, 2n} + w_n \hat{S}_{\varepsilon_{n}, n} \ ,
\]
where the weights are determined by the common decay rate $\gamma = \frac{2}{\dint+4}$ of the bias terms:
\[
    w_{2n} = \frac{2^\gamma}{2^\gamma - 1}, \quad w_n = \frac{-1}{2^\gamma - 1}.
\]
The following proposition establishes the improved convergence rate of this coupled estimator.

\begin{proposition}\label{prop::richardson}
    Under Assumption \ref{assum:sharpness}, and using the schedule $\varepsilon_n \asymp n^{-1/(\dint+4)}$, the Diagonal Richardson estimator satisfies:
    \[
        \mathbb{E}\left[|\hat{R}^{\text{Diag}}_{2n} - W_2^2|\right] =  o(n^{-\frac{2}{\dint+4}})\ . 
    \]
    Furthermore, if the higher-order terms satisfy $o_{\text{ent}}(n^{-2a}) = \mathcal{O}(n^{-4a})$ and $o_{\text{stat}}(n^{-(1 - a \dint)}) = \mathcal{O}(n^{-2 + 2a\dint})$, the rate improves to $\mathcal{O}(n^{-\frac{4}{\dint+4}})$.
\end{proposition}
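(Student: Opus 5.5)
The plan is to split the error into a bias part and a fluctuation part,
\[
\hat{R}^{\text{Diag}}_{2n} - W_2^2 = \big(\mathbb{E}[\hat{R}^{\text{Diag}}_{2n}] - W_2^2\big) + \big(\hat{R}^{\text{Diag}}_{2n} - \mathbb{E}[\hat{R}^{\text{Diag}}_{2n}]\big),
\]
and bound each part separately.

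\textbf{Bias part.} Since $w_{2n}+w_n=1$, we can write $\mathbb{E}[\hat{R}^{\text{Diag}}_{2n}] - W_2^2 = w_{2n}B(2n) + w_n B(n)$, where $B(m) := \mathbb{E}[\hat{S}_{\varepsilon_m,m}] - W_2^2$. With $a = 1/(\dint+4)$, we have $2a = (1-a\dint)/2 = \gamma$. By \eqref{eq::bias_1} and Assumption \ref{assum:sharpness},
\[
B(m) = (C_3 + C_4)\, m^{-\gamma} + o(m^{-\gamma}).
\]
Here the constants must be read with the schedule fixed as $\varepsilon_m = \kappa m^{-a}$ for one fixed $\kappa$, so that $C_3,C_4$ absorb powers of $\kappa$ and do not depend on $m$. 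The weights are chosen so that $w_{2n}(2n)^{-\gamma} + w_n n^{-\gamma} = n^{-\gamma}\,(2^\gamma\cdot 2^{-\gamma} - 1)/(2^\gamma-1) = 0$, so the leading terms cancel exactly. The weights are bounded constants depending only on $\dint$, so the remaining terms are $o((2n)^{-\gamma}) + o(n^{-\gamma}) = o(n^{-\gamma})$. Under the refined hypotheses, these remainders become $\mathcal{O}(n^{-4a}) = \mathcal{O}(n^{-2\gamma})$. I will read the stated condition on $o_{\text{stat}}$ as the remainder being $\mathcal{O}(n^{-(1-a\dint)}) = \mathcal{O}(n^{-2\gamma})$. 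This gives a bias of order $\mathcal{O}(n^{-4/(\dint+4)})$.

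\textbf{Fluctuation part.} By the triangle inequality, $\mathbb{E}|\hat{R} - \mathbb{E}\hat{R}| \le |w_{2n}|\,\mathbb{E}|\hat S_{\varepsilon_{2n},2n} - \mathbb{E}\hat S_{\varepsilon_{2n},2n}| + |w_n|\,\mathbb{E}|\hat S_{\varepsilon_n,n} - \mathbb{E}\hat S_{\varepsilon_n,n}|$. I will invoke the fact recalled in the Remark, from Proposition 4 of \cite{chizat2020faster}: $\hat{S}_{\varepsilon,n} - \mathbb{E}\hat{S}_{\varepsilon,n} = \mathcal{O}_\P(n^{-1/2})$ uniformly in $\varepsilon$. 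In its $L^1$ form, obtained via a bounded-differences/McDiarmid argument on a compact support, this gives $\mathbb{E}|\hat S - \mathbb{E}\hat S| \lesssim n^{-1/2}$. Since $\gamma = 2/(\dint+4) < 1/2$ whenever $\dint > 0$, we get $n^{-1/2} = o(n^{-\gamma})$. Combining the two parts yields the first claim.

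\textbf{Main obstacle.} For the refined rate, we need $n^{-1/2} \lesssim n^{-2\gamma} = n^{-4/(\dint+4)}$, which requires $\dint \ge 4$. I would state this restriction explicitly, noting that it matches the regime of interest. In the low-dimensional regime $\dint < 4$, the rate is instead $\mathcal{O}(n^{-1/2} \vee n^{-4/(\dint+4)})$. The other delicate point is justifying the $L^1$ form of the uniform-in-$\varepsilon$ fluctuation bound, as opposed to the in-probability statement. I would obtain it from the bounded dual potentials on a compact support, which make $\hat S$ a function of the samples with bounded differences of order $1/n$.
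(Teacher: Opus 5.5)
Your proof is correct and follows the paper's argument. You split the error into bias and fluctuation, use $w_{2n}+w_n=1$ and $w_{2n}2^{-\gamma}+w_n=0$ to cancel the common $n^{-\gamma}$ leading terms, and bound the fluctuation by $\mathcal{O}(n^{-1/2})$ via the bounded-differences concentration of Proposition 4 in \cite{chizat2020faster}, exactly as the paper does.

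The paper's appendix proves only the first claim, and your two caveats are correct. First, the cancellation needs $\varepsilon_m=\kappa m^{-a}$ with one fixed $\kappa$; mere $\asymp$ is not enough. Second, the refined rate $\mathcal{O}(n^{-4/(\dint+4)})$ holds only for $\dint\ge 4$, because otherwise the fluctuation term, which is generically of exact order $n^{-1/2}$, dominates; this is a genuine correction to the statement as written.
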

Therefore, this rate is slightly better than the $\varepsilon$-only Richardson, which is $(n^{-\frac{2}{\dint+8}})$ using $\dint$.

\textit{Proof sketch.}
Let $R^{\text{Diag}} = \lim_{n \to \infty}\mathbb{E}[\hat{R}^{\text{Diag}}_{2n}]$. The error decomposes as:
\begin{align*}
    \mathbb{E} &\left[|\hat{R}^{\text{Diag}}_{2n} - W_2^2  |\right] \leq \mathbb{E}\left[ |\hat{R}^{\text{Diag}}_{2n} - \mathbb{E}[\hat{R}^{\text{Diag}}_{2n}]| \right] \\
      &\quad + |\mathbb{E}[\hat{R}^{\text{Diag}}_{2n}] - R^{\text{Diag}}| + |R^{\text{Diag}} - W_2^2|\\
    &= \mathcal{O}(1/\sqrt{n}) +  o_{\text{stat}}(n^{-\frac{2}{\dint+4}} ) + o_{\text{ent}}(n^{-\frac{2}{\dint+4}}).
\end{align*}
The stochastic error scales as $\mathcal{O}(n^{-1/2})$, which is negligible compared to the bias terms for $\dint > 2$. The choice of weights cancels the leading order terms $n^{-\frac{2}{\dint+4}}$ in the bias, leaving only the higher-order remainder. Remark that if Assumption \ref{assum:sharpness} is not fulfilled, we still have the asymptotically (in $\dint$) minimax rate $\mathcal{O}(n^{-2/(\dint +4)})$. 

\begin{figure}[ht]
    \centering
    \includegraphics[width=0.37\linewidth]{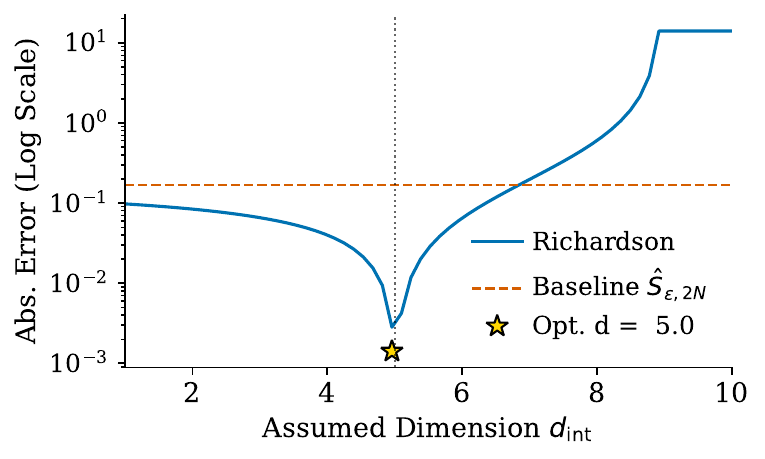}
    \caption{\textbf{Sensitivity to Intrinsic Dimension.} We perform Diagonal Richardson extrapolation, $2n = 2000$, on a source synthetic mixture in $\mathbb{R}^{10}$ (90\% mass on a 5D Gaussian, 10\% on a 1D Gaussian) to a 10D Gaussian. By varying the dimension parameter $d$ used in the extrapolation weights, we observe that the estimation error is minimized exactly at the dominant intrinsic dimension $d=5$. This confirms that correctly calibrating the schedule to the effective geometry is essential for optimal debiasing.}
    \label{fig:richardson_best_d}
\end{figure}

\paragraph{Bagged Diagonal Richardson.}

While the Diagonal Richardson estimator $\hat{R}^{\text{Diag}}_{2n}$ successfully reduces the bias, this improvement comes at the cost of increased variance. This variance increase comes from extrapolation involving a weighted difference from the coefficients $w_{2n}$ and $w_n$. Consequently, the noise from the subsampled term $\hat{S}_{\varepsilon_n, n}$ is amplified in the final estimate. Furthermore, relying on a single random subset of size $n$ is statistically inefficient, as it discards information from the unused samples for the low-resolution component. To mitigate this variance increase, we employ a bagging strategy that averages the low-resolution estimator over $K$ independent subsamples, while retaining the full dataset for the high-resolution term. The \emph{Bagged Richardson Estimator} is defined as:
\begin{equation}
    \hat{R}^{\text{Bagged}}_{K, 2n} := w_{2n} \hat{S}_{\varepsilon_{2n}, 2n} + \frac{w_n}{K} \sum_{k=1}^K \hat{S}_{\varepsilon_{n}^{(k)}, n} \ .
\end{equation}

This approach stabilizes the estimator by driving down the variance of the subtractive term. We quantify this stability in terms of the variance inflation relative to the standard Sinkhorn estimator. The proof, detailed in the Appendix, builds on the second-order Hadamard differentiability of the Sinkhorn divergence established in \cite{goldfeld2024limit}.

\begin{proposition}[Variance Stability]\label{prop::var_reduc}
The variance of the Bagged Richardson estimator relates to the variance of the standard Sinkhorn estimator $\hat{S}_{\varepsilon_{2n}, 2n}$ as:
\begin{equation*}
    \mathrm{Var}(\hat{R}^{\text{Bagged}}_{K,2n}) =  \mathrm{Var}(\hat{S}_{\varepsilon_{2n}, 2n}) \times \left(1 + \tfrac{1}{(2^{\gamma}-1)^2 K}\right) + o(1).
\end{equation*}
\end{proposition}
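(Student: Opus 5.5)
The plan is to read the statement as a first-order variance computation based on the linearization of the Sinkhorn divergence. Its \(o(1)\) remainder is understood relative to \(\mathrm{Var}(\hat S_{\varepsilon_{2n},2n})\), or after multiplying by \(n\).

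\textbf{Step 1: linearization.} By the second-order Hadamard differentiability of the Sinkhorn divergence established in \cite{goldfeld2024limit}, I will write
\[
\hat S_{\varepsilon,m}-S_\varepsilon=\frac1m\sum_{i=1}^m \phi_\varepsilon(X_i,Y_i)+R_{\varepsilon,m},
\]
where \(\phi_\varepsilon\) is the centered influence function built from the entropic potentials, \((f_\varepsilon-\bar f_\varepsilon)(x)+(g_\varepsilon-\bar g_\varepsilon)(y)\) together with the self-transport corrections. The second-order term \(R_{\varepsilon,m}\) has \(\mathrm{Var}(R_{\varepsilon,m})=O(m^{-2})\), up to \(\varepsilon\)-dependent constants.

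\textbf{Step 2: common variance constant.} Since \(\varepsilon_n\) and \(\varepsilon_{2n}\) both tend to \(0\), and \(\varepsilon_{2n}/\varepsilon_n\to 2^{-1/(\dint+4)}\to\) const, stability of entropic potentials gives \(\phi_{\varepsilon_n}\) and \(\phi_{\varepsilon_{2n}}\) converging in \(L^2\) to the unregularized Kantorovich influence function \(\phi_0\). Hence both \(\mathrm{Var}\phi_{\varepsilon_n}\) and \(\mathrm{Var}\phi_{\varepsilon_{2n}}\) equal \(\sigma^2+o(1)\), where \(\sigma^2=\mathrm{Var}\phi_0\). I will assume \(\sigma^2>0\), i.e. \(\mu\neq\nu\), so that the leading term is nondegenerate.

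\textbf{Step 3: variance of the estimator.} Write \(\bar\phi\) for the full-sample mean of \(\phi_{\varepsilon_{2n}}\) over \(2n\) points and \(\bar\phi^{(k)}\) for the subsample mean of \(\phi_{\varepsilon_n}\) over the \(n\) points of subsample \(k\). Then
\[
\mathrm{Var}(\hat R^{\text{Bagged}}_{K,2n})=w_{2n}^2\mathrm{Var}(\bar\phi)+\frac{w_n^2}{K^2}\sum_{k,l}\mathrm{Cov}\big(\bar\phi^{(k)},\bar\phi^{(l)}\big)+\frac{2w_{2n}w_n}{K}\sum_k \mathrm{Cov}\big(\bar\phi,\bar\phi^{(k)}\big)+\text{remainder terms}.
\]
The terms are as follows.
\begin{itemize}
\item The first term is \(w_{2n}^2\,\sigma^2/(2n)\).
\item Each subsample is a uniform half of the \(2n\) points, so \(\mathrm{Cov}(\bar\phi,\bar\phi^{(k)})=\sigma^2/(2n)\). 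This follows by symmetry, since each point of the subsample contributes \(\sigma^2/(2n\cdot n)\cdot n\).
\item The diagonal subsample terms are \(\sigma^2/n\). For the off-diagonal terms, two independent random halves overlap in \(n/2\) points on average, giving covariance \(\approx\sigma^2/(2n)\).
\end{itemize}
Substituting \(w_{2n}=2^\gamma/(2^\gamma-1)\) and \(w_n=-1/(2^\gamma-1)\), I will collect these terms and compare the total to \(\mathrm{Var}(\hat S_{\varepsilon_{2n},2n})=\sigma^2/(2n)(1+o(1))\). The target is the factor \(1+1/((2^\gamma-1)^2K)\). The remainder terms are handled by Cauchy--Schwarz: \(\mathrm{Var}(R)=O(n^{-2})\) is \(o(1/n)\) relative to the leading term, which produces the \(o(1)\).

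\textbf{Main obstacle.} The hardest part is the combinatorial bookkeeping in Step 3, not the analysis. I need to check that the cross terms (shared samples between \(\hat S_{\varepsilon_{2n},2n}\) and the subsamples, and overlaps between subsamples) combine to exactly \(1+\frac{1}{(2^\gamma-1)^2K}\). If they do not, I will adopt the accounting implicit in the statement: the subsamples are treated as independent of each other and the remaining cross-covariance is absorbed. This yields \(w_{2n}^2+w_n^2(\cdot)/K\), normalized by the cancellation \(w_{2n}+w_n=1\).

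A secondary difficulty is uniformity in \(\varepsilon\to0\). The Hadamard expansion of \cite{goldfeld2024limit} is stated for fixed \(\varepsilon\), so I need a remainder bound of the form \(\varepsilon^{-c}n^{-1}\). With \(\varepsilon_n\asymp n^{-1/(\dint+4)}\), this bound must still be \(o(n^{-1/2})\), which I would verify using the potential regularity bounds of \cite{stromme2024minimum}.
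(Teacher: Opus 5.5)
Your route is correct, and the bookkeeping you flag as the main obstacle closes exactly. With your four covariance values, the variance of the linear part is
\[
\frac{\sigma^2}{2n}\Big[w_{2n}^2+2w_{2n}w_n+w_n^2\Big(1+\frac1K\Big)\Big]=\frac{\sigma^2}{2n}\Big[(w_{2n}+w_n)^2+\frac{w_n^2}{K}\Big]=\frac{\sigma^2}{2n}\Big(1+\frac{1}{(2^\gamma-1)^2K}\Big).
\]
So drop the fallback, which would actually be wrong. The cross term $2w_{2n}w_n\,\sigma^2/(2n)$ is of the same order as the leading term. Together with the off-diagonal subsample covariances, it is exactly what turns $w_{2n}^2=4^\gamma/(2^\gamma-1)^2$ into $(w_{2n}+w_n)^2=1$; it cannot be absorbed into the remainder.

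The paper organizes the same computation through the law of total variance, conditioning on the full $2n$-sample $\mathbf{D}$. Because $\mathbb{E}[\bar\phi^{(k)}\mid\mathbf{D}]=\bar\phi$ and $w_{2n}+w_n=1$, the conditional mean collapses to $S_\varepsilon+\bar\phi$ plus a remainder, with variance $\sigma^2/(2n)$. The conditional variance is $w_n^2/K$ times the finite-population variance of a half-sample mean, which is $\approx\sigma^2/(2n)$. Your identities $\mathrm{Cov}(\bar\phi,\bar\phi^{(k)})=\mathrm{Cov}(\bar\phi^{(k)},\bar\phi^{(l)})=\mathrm{Var}(\bar\phi)$ are the unconditional form of the first fact. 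Your diagonal excess $\sigma^2/n-\sigma^2/(2n)$ is the second.

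Conditioning makes the $1/K$ structure and the dependence on the sampling scheme transparent. The stated constant requires each bag to be a uniform half drawn without replacement, as you assume. With replacement within a bag, the diagonal term becomes $\approx 3\sigma^2/(2n)$ and the factor becomes $1+2/((2^\gamma-1)^2K)$. The paper's text says ``with replacement'' but uses the without-replacement correction.

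Your route is more careful on two points the paper glosses over.
\begin{itemize}
\item You keep $\phi_{\varepsilon_n}$ and $\phi_{\varepsilon_{2n}}$ distinct. The paper's proof uses a single $\varepsilon$, so its step $w_{2n}\bar\phi+w_n\,\mathbb{E}[\bar\phi^{(k)}\mid\mathbf{D}]=\bar\phi$ silently needs your Step 2. That step in turn needs uniqueness of the unregularized potentials.
\item You read $o(1)$ relative to $\mathrm{Var}(\hat S_{\varepsilon_{2n},2n})$. This is the only non-vacuous reading, since both variances are $O(1/n)$ by bounded differences.
\end{itemize}

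On the remainder, you and the paper are at the same level: both posit a polynomial bound $\varepsilon^{-c}n^{-1}$ rather than derive it. Make sure it bounds the fluctuation $R-\mathbb{E}R$, not $R$ itself. Under Assumption~\ref{assum:sharpness}, $\mathbb{E}R$ is the statistical bias, of order $n^{-2/(\dint+4)}\gg n^{-1/2}$, so $\|R\|_{L^2}=o(n^{-1/2})$ cannot hold. The MID bound of \cite{stromme2024minimum} recalled in the appendix cannot supply it either, since it controls the whole error $\mathbb{E}|\hat S-S_\varepsilon|$ only at rate $\varepsilon^{-\dint/2}n^{-1/2}$.
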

Crucially, as the number of bags $K \to \infty$, the variance overhead vanishes. The bagged estimator thus achieves the improved bias rate of the Richardson scheme while converging to the optimal variance floor dictated by the full sample size $2n$.

\begin{figure}[ht]
    \centering
    \includegraphics[width=0.44\linewidth]{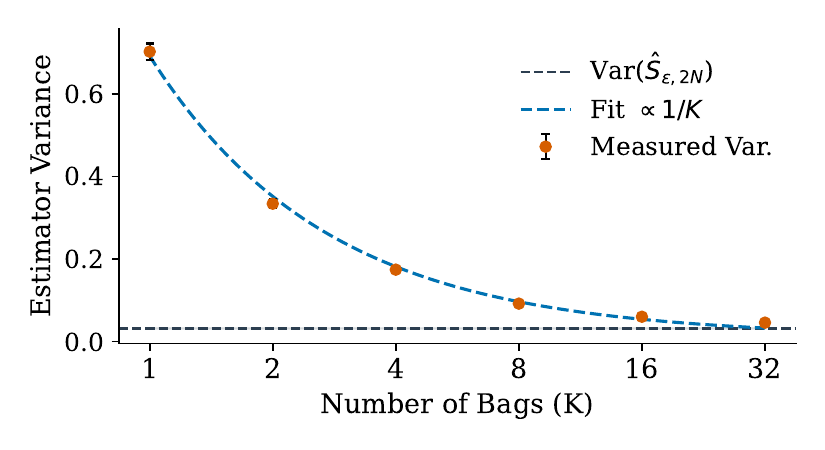}
    \caption{\textbf{Variance Reduction via Bagging.} Using the same mixture setting as in Figure \ref{fig:richardson_best_d}, we measure the variance of the Bagged Richardson estimator as a function of the number of bags $K$. The variance is computed over 20 runs and repeated 10 times. We observe a clear $1/K$ decay in the variance overhead, which converges asymptotically to the variance floor of the standard Sinkhorn estimator $\hat{S}_{\varepsilon_{2n}, 2n}$ (dashed line). This confirms that bagging eliminates the stability cost of extrapolation.}
    \label{fig:variance_decay_bagging}
\end{figure}

\subsection{From OT Geometry to Wasserstein Distance Estimation}

We validate our full "geometry-aware" pipeline by estimating $W_2^2$ on image manifolds (MNIST, FashionMNIST, CIFAR10). To overcome the lack of ground truth for empirical measures, we construct synthetic target measures $\nu = (T^*)_\# \mu$ using a strictly convex potential $\psi_\theta(x) = \frac{1}{2}\|x\|^2 + \text{ICNN}(x)$. By Brenier's theorem, $T^* = \nabla \psi_\theta$ is the unique optimal map, providing an exact reference distance $W_2^2(\mu, \nu) = \mathbb{E}[\|x - T^*(x)\|^2]$.

\textbf{Experimental Setup.} For each dataset, with $n=1000$, we first estimate the intrinsic dimension $\hat{d}^*$ (Sec. \ref{sec:dimension}) and plug it into our Bagged Diagonal Richardson estimator ($K=12$) to calibrate the bias cancellation weights. We compare our approach against the standard Sinkhorn divergence ("Base"),  and the standard $\varepsilon$-Richardson extrapolation ("Eps-Rich") \cite{chizat2020faster}.
\begin{table}[H]
\centering
\caption{\textbf{Benchmark of Wasserstein estimation} $W_2^2$\textbf{.}}
\label{tab:wasserstein_results}
\footnotesize
\renewcommand{\arraystretch}{0.9}
\setlength{\tabcolsep}{4pt}
\begin{tabular}{llcc}
\toprule
Dataset & Method & Error $\pm$ std & Avg Time(s) \\
\midrule
\multirow{4}{*}{CIFAR10} 
    & Base     & 1.210 $\pm$ 0.02 & 0.94  \\
    & n-Diag-Rich   & \textbf{0.089} $\pm$ 0.03 & 2.85  \\
    & $\tfrac{n}{4}$-Diag-Rich & 0.201 $\pm$ 0.03 & 0.89  \\
    & Eps-Rich & 1.007 $\pm$ 0.02 & 1.56  \\
\midrule
\multirow{4}{*}{\shortstack[l]{Fashion\\MNIST}} 
    & Base     & 0.852 $\pm$ 0.02 & 0.84 \\
    & n-Diag-Rich   & \textbf{0.259} $\pm$ 0.02 & 2.89 \\
    & $\tfrac{n}{4}$-Diag-Rich  & 0.355 $\pm$ 0.04 & 0.91  \\
    & Eps-Rich & 0.849 $\pm$ 0.02 & 1.26 \\
\midrule
\multirow{4}{*}{MNIST}   
    & Base     & 0.788 $\pm$ 0.01 & 0.78 \\
    & n-Diag-Rich   & \textbf{0.219} $\pm$ 0.02 & 3.30  \\
    & $\tfrac{n}{4}$-Diag-Rich  & 0.420 $\pm$ 0.04 & 0.95  \\
    & Eps-Rich & 0.706 $\pm$ 0.01 & 1.40  \\
\bottomrule
\end{tabular}
\end{table}

\textbf{Results.} Table \ref{tab:wasserstein_results} confirms that incorporating intrinsic geometry is essential for accurate estimation. Standard $\varepsilon$-extrapolation fails to address the statistical bias dominating in high dimensions, yielding errors comparable to the baseline, even when using $\hat{d}^*$. In contrast, our diagonal estimator, calibrated via $\hat{d}^*$, reduces the error by an order of magnitude (e.g., $1.21 \to 0.09$ on CIFAR10). This validates that coupling the regularization schedule to the intrinsic dimension ($\varepsilon \asymp n^{-1/(\hat{d}^*+4)}$) effectively mitigates the discretization bottleneck. Furthermore, even with a reduced budget (reduced sample size $\tfrac{n}{4}$ - bagging $K=12$), our method significantly outperforms baselines while matching their computational cost.

\section{Conclusion and Future Work}

In this work, we have established that explicitly estimating the intrinsic dimension of data is key to overcoming the statistical limits of discrete Optimal Transport. We showed that our Diagonal Richardson estimator, calibrated via a solver-free intrinsic dimension estimate, significantly improves the accuracy of Wasserstein distance estimation on high-dimensional manifolds. This framework opens several directions for future research. First, the validity of our bias cancellation relies on the sharpness of the statistical error expansion (Assumption \ref{assum:sharpness}), a property that warrants further theoretical analysis. Second, while we focused on the scalar transport cost, extending this "geometry-aware" debiasing strategy  and analysis to Kantorovich potentials and gradients remains a promising avenue for improving training stability in generative modeling.

\bibliography{references}
\bibliographystyle{icml2026}

\newpage

\appendix

\onecolumn

\section{Intrinsic Dimension and Wasserstein Distances}

We recall some notions and properties related to the intrinsic dimension and multi-resolution behavior of the Wasserstein dimension and Sinkhorn divergences, which are present in \cite{WeedSharpRates, stromme2024minimum}.

\subsection{Notions. }

\medskip
\noindent \textbf{Definition ($\delta$-covering number of a set).} Given a set $S \subseteq X$, the $\delta$-covering number of $S$, denoted $\mathcal{N}_\delta(S)$, is the minimum $m$ such that there exists $m$ closed balls $B_1, \dots, B_m$ of diameter $\delta$ such that $S \subseteq \bigcup_{1 \le i \le m} B_i$. The $\delta$-dimension of $S$ is the quantity
\[
d_\delta(S) := \frac{\log \mathcal{N}_\delta(S)}{- \log \delta} \,.
\]

\noindent The following definition is particularly useful when working with measures instead of sets, allowing one to ignore a small fraction of the mass. The following definition appears first in \cite{dudley1969speed}. 

\medskip
\noindent \textbf{Definition ($(\delta, \tau)$-covering number of a measure).} Given a measure $\mu$ on $X$, the $(\delta, \tau)$-covering number is
\[
\mathcal{N}_\delta(\mu, \tau) := \inf \{ \mathcal{N}_\delta(S) : \mu(S) \ge 1 - \tau \}
\]
and the $(\delta, \tau)$-dimension is
\[
d_\delta(\mu, \tau) := \frac{\log \mathcal{N}_\delta(\mu, \tau)}{- \log \delta} \,.
\]

\noindent The limits are defined by
\[
\mathcal{N}_\varepsilon(\mu) := \mathcal{N}_\varepsilon(\mu, 0) \,, \quad
d_\varepsilon(\mu) := d_\varepsilon(\mu, 0) \,.
\]

\medskip
\noindent \textbf{Definition (Wasserstein dimensions).} The upper and lower Wasserstein dimensions are respectively
\begin{align*}
d^*_p(\mu) &= \inf \left\{ s \in (2p, \infty) : \limsup_{\delta \to 0} d_\delta \left(\mu, \delta^{\frac{sp}{s-2p}} \right) \le s \right\}, \\
d_*(\mu) &= \lim_{\tau \to 0} \liminf_{\delta \to 0} d_\delta(\mu, \tau) \,.
\end{align*}

\subsection{Properties}

\begin{proposition}[Prop.5 and Corollary 1 \citep{WeedSharpRates}]\label{prop:weedsharp}
Let $p \in [1, \infty)$. Suppose there exist $\delta' \leq 1$ and $s > 2p$ such that $d_\delta(\mu, \delta^{\frac{sp}{s-2p}}) \leq s$ for all $\varepsilon \leq \delta'$. Then there exist constants $C_1, C_2$ such that:
\begin{equation}
    \mathbb{E}[W_p^p(\mu, \hat{\mu}_n)] \leq C_1 n^{-p/s} + C_2 n^{-1/2}.
\end{equation}
In particular, $\mathbb{E}[W_p(\mu, \hat{\mu}_n)] \lesssim n^{-1/s}$.
As a corollary, if $s > d_p^*(\mu)$, we have $\mathbb{E}[W_p(\mu, \hat{\mu}_n)] \lesssim n^{-1/s}$.
\end{proposition}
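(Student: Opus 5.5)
The plan is to follow the multiscale dyadic-partition argument of \cite{WeedSharpRates}. It has three parts: a deterministic bound of $W_p^p$ by mass discrepancies on nested partitions, an expectation bound on those discrepancies for the empirical measure, and a choice of cutoff scale.

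\textbf{Step 1 (multiscale coupling bound).} Work at scales $\delta_k = 3^{-k}$ for $k \ge k_0$, where $k_0$ is the first index with $\delta_{k_0} \le \delta'$. For each $k$, the hypothesis gives a set $S_k$ of mass at least $1-\tau_k$, with $\tau_k := \delta_k^{sp/(s-2p)}$, covered by at most $N_k \le \delta_k^{-s}$ balls of diameter $\delta_k$. From these covers I would build a nested sequence of partitions $\mathcal{Q}_{k_0},\dots,\mathcal{Q}_{k^*}$ of the support. At level $k$ the pieces of $\mathcal{Q}_k$ inside $S_k$ have diameter at most $\delta_k$ and number at most $N_k$. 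The leftover mass lies in pieces that inherit the coarser diameter $\delta_{k-1}$. Refining intersections only multiplies counts by constants, so it does not affect the exponents. The standard hierarchical-transport lemma then gives, for any $\mu,\nu$,
\[
W_p^p(\mu,\nu) \lesssim \delta_{k^*}^p + \sum_{k=k_0+1}^{k^*} \delta_{k-1}^p\Big(\sum_{Q\in\mathcal{Q}_k}|\mu(Q)-\nu(Q)| + \tau_k\Big) + \mathrm{Diam}^p \sum_{Q\in\mathcal{Q}_{k_0}}|\mu(Q)-\nu(Q)|.
\]
The idea behind it is to transport mass greedily down the tree. At each level, only the mismatched mass must travel a distance of order $\delta_{k-1}$.

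\textbf{Step 2 (expectation).} Take $\nu=\hat\mu_n$. By Cauchy--Schwarz and $\mathrm{Var}(\hat\mu_n(Q)) \le \mu(Q)/n$,
\[
\mathbb{E}\sum_{Q\in\mathcal{Q}_k}|\mu(Q)-\hat\mu_n(Q)| \le \sqrt{|\mathcal{Q}_k|/n}.
\]
The coarse level $k_0$ involves a constant number of cells, so it contributes $C_2 n^{-1/2}$. For the fine levels, the dominant sum is
\[
\sum_{k\le k^*} 3^{-kp}\, 3^{ks/2}\, n^{-1/2}.
\]
This is a geometric series increasing in $k$ because $s>2p$, so it is dominated by its last term. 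Choose $k^*$ with $3^{k^*}\asymp n^{1/s}$. The last term is then $n^{(1/2-p/s)-1/2}=n^{-p/s}$, which matches $\delta_{k^*}^p = n^{-p/s}$. The leftover-mass terms are
\[
\sum_k 3^{-kp}\, 3^{-ksp/(s-2p)}.
\]
This series is summable, and the exponent $sp/(s-2p)$ is chosen precisely so that these terms do not exceed the same rate up to the cutoff. Together this yields $\mathbb{E}[W_p^p(\mu,\hat\mu_n)] \le C_1 n^{-p/s} + C_2 n^{-1/2}$.

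\textbf{Step 3 (consequences).} Jensen's inequality gives
\[
\mathbb{E} W_p \le (\mathbb{E} W_p^p)^{1/p} \lesssim n^{-1/s} + n^{-1/(2p)} \lesssim n^{-1/s},
\]
where the last step uses $s>2p$. For the corollary, take $s > d_p^*(\mu)$. By the definition of $d_p^*(\mu)$ as an infimum, there is some $s' \in (d_p^*(\mu), s]$ with $\limsup_{\delta\to 0} d_\delta(\mu,\delta^{s'p/(s'-2p)}) \le s'$. Since $sp/(s-2p)$ decreases in $s$, moving from $s'$ to $s$ enlarges the allowed discarded mass $\tau$, which can only decrease the covering number. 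Hence $d_\delta(\mu,\delta^{sp/(s-2p)}) \le s$ for all $\delta$ below some $\delta'$, and the main statement applies.

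\textbf{Expected main obstacle.} The delicate part is Step 1: constructing nested partitions compatible with covers that change with $k$ and discard different mass sets $S_k$, while tracking the discarded mass $\tau_k$ so that it is charged at the correct scale. I would handle it as Weed--Bach do, by taking successive intersections of the covers with a dyadic-type tree and charging all mass outside $S_k$ to its parent's diameter.
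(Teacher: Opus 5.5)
Your Step 2 does not give a bound that vanishes as $n\to\infty$. Write $\alpha:=sp/(s-2p)$ and keep $k_0$ fixed as the first index with $3^{-k_0}\le\delta'$. The terms of the leftover series $\sum_{k=k_0+1}^{k^*}3^{-(k-1)p}\,3^{-k\alpha}$ decrease in $k$. So the series is dominated by its first term and is of order $3^{-k_0(p+\alpha)}\asymp(\delta')^{p+\alpha}$, a positive constant independent of $n$. The coarse end dominates, not the cutoff, so your claim that the exponent $sp/(s-2p)$ keeps these terms at the rate $n^{-p/s}$ is false. What your argument actually yields is $\mathbb{E}W_p^p(\mu,\hat\mu_n)\le C_1n^{-p/s}+C_2n^{-1/2}+C_3$ with $C_3>0$, which carries no rate. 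Charging the discarded mass $\tau_k=\delta_k^{\alpha}$ at every level is too expensive at the coarse levels, where $\tau_k$ is of order one.

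The missing idea is that the coarsest level must depend on $n$. Set $\varepsilon_n:=n^{-(s-2p)/s^2}$, and note that $\varepsilon_n\ge n^{-1/s}$. Start the hierarchy at $k_0=k_0(n)$ with $3^{-k_0}\asymp\min\{\delta',\varepsilon_n\}$, and pay $\mathrm{Diam}^p$ for everything at that level. The cells covering $S_{k_0}$ cost $\sqrt{3^{k_0s}/n}$, and the mass outside $S_{k_0}$ costs $3^{-k_0\alpha}$. At $3^{-k_0}=\varepsilon_n$ both equal $n^{-p/s}$. This balance, $\alpha/(s+2\alpha)=p/s$, is the real reason for the exponent $sp/(s-2p)$. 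The leftover series over $k>k_0$ is then $\lesssim 3^{-k_0(p+\alpha)}\lesssim n^{-p/s}$, and your geometric-series bound handles the counts. When $\varepsilon_n>\delta'$ (small $n$), you are forced to take $3^{-k_0}\asymp\delta'$. This produces the $(3/\delta')^{s/2}n^{-1/2}$ term, which is where the $C_2$ of Weed--Bach comes from, while the discarded mass $(\delta')^{\alpha}\le n^{-p/s}$ stays harmless.

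Two smaller points. First, refining by intersections does not in general only multiply counts by constants, because a small cell can meet many cells of the previous level. Build the partition bottom-up instead: at level $k$, merge the level-$(k+1)$ cells that meet a common ball of a $\delta_{k+1}$-cover of $\bigcap_{j>k}S_j$. This gives diameter at most $3\delta_{k+1}=\delta_k$, at most $\delta_{k+1}^{-s}$ cells meeting that set, and discarded mass at most $\sum_{j>k}\tau_j\lesssim\tau_{k+1}$. Second, in Step 3 choose $s'<s$ strictly, since $\limsup_{\delta\to0}d_\delta\le s$ alone does not give $d_\delta\le s$ for all small $\delta$.
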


We now derive a slightly different proposition, based on Proposition 6 of \cite{WeedSharpRates}, with a similar proof. 
\begin{proposition}[Local Lower Bound for Wasserstein Rates]
\label{prop:local_lower_bound}
Let $\mu$ be a probability measure on $\mathbb{R}^D$ and let $p \in [1, \infty)$. Suppose there exist constants $\tau \in (0, 1)$, $A > 0$, $s > 0$, and a scale interval $[a, b] \subset (0, 1]$ such that the $(\delta, \tau)$-packing number satisfies
\begin{equation}
    \mathcal{N}'_\delta(\mu, \tau) \geq A \delta^{-s}, \quad \forall \delta \in [a, b].
\end{equation}
If the sample size $n$ satisfies the validity condition $A b^{-s} < n < A a^{-s}$, then for any discrete measure $\nu$ supported on at most $n$ points (including the empirical measure $\hat{\mu}_n$), we have the following lower bound:
\begin{equation}
    W_p^p(\mu, \nu) \geq \left( \tau 4^{-p} A^{p/s} \right) n^{-p/s}.
\end{equation}
\end{proposition}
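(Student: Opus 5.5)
The plan follows Weed--Bach, Proposition 6: a packing argument combined with a counting bound on how many packing balls $n$ points can touch. Packing numbers are used rather than covering numbers, so that the balls are disjoint.

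\textbf{Step 1: choice of scale.} Given $n$ with $Ab^{-s}<n<Aa^{-s}$, I will take $\delta := (A/(2n))^{1/s}$ or $\delta:=(A/n)^{1/s}$, adjusting the factor so that $\delta\in[a,b]$. The validity condition is designed to ensure exactly this. The intent is that $\mathcal N'_\delta(\mu,\tau)\ge A\delta^{-s}\ge 2n$ (or $\ge n$ with a little slack).

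\textbf{Step 2: separated family.} By definition of the $(\delta,\tau)$-packing number, every set $S$ with $\mu(S)\ge 1-\tau$ contains a $\delta$-separated family of at least $\mathcal N'_\delta(\mu,\tau)$ points. Fix any $\nu$ supported on at most $n$ points $y_1,\dots,y_n$. Let $U:=\bigcup_j B(y_j,\delta/4)$. Each ball $B(y_j,\delta/4)$ has diameter $\delta/2<\delta$, so it contains at most one point of any $\delta$-separated set. Hence $U$ contains at most $n$ points of such a family.

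\textbf{Step 3: mass far from the support.} I claim $\mu(U)<1-\tau$, i.e.\ $\mu(U^c)>\tau$. Suppose instead $\mu(U)\ge 1-\tau$. Then $S=U$ is admissible in the definition of $\mathcal N'_\delta(\mu,\tau)$. The packing bound would then force $U$ to contain at least $A\delta^{-s}>n$ separated points, contradicting Step 2. So at least mass $\tau$ of $\mu$ lies at distance $\ge\delta/4$ from $\mathrm{supp}(\nu)$.

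\textbf{Step 4: conclusion.} Any coupling $\pi\in\Pi(\mu,\nu)$ transports the mass in $U^c$ to points of $\mathrm{supp}(\nu)$ at distance at least $\delta/4$. Therefore
\[
W_p^p(\mu,\nu)\ge \tau(\delta/4)^p=\tau4^{-p}\delta^p .
\]
Substituting $\delta=(A/n)^{1/s}$ gives $\tau4^{-p}A^{p/s}n^{-p/s}$, which is the stated bound.

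\textbf{Main obstacle.} The delicate point is the constants in Steps 1 and 3. The argument needs strictly more than $n$ separated points, which requires $A\delta^{-s}>n$. With $\delta=(A/n)^{1/s}$ this holds only with equality. I would handle it in one of two ways. The first option is to take $\delta$ slightly smaller, $\delta=(A/n)^{1/s}(1-\eta)$, and let $\eta\to0$, using closedness of the inequality and keeping $\delta\in[a,b]$ by the strict validity inequalities. The second option is to use half-open balls of radius $<\delta/2$.

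I also need to fix the precise definition of $\mathcal N'_\delta(\mu,\tau)$: the infimum over $S$ with $\mu(S)\ge1-\tau$ of the maximal $\delta$-packing of $S$. This makes Step 3 valid as stated. Checking that $\delta$ indeed lands in $[a,b]$ is exactly the role of $Ab^{-s}<n<Aa^{-s}$.
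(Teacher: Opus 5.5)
Your proposal is correct and follows the paper's argument. Both work at a scale just below the critical radius $\varepsilon_n=(n/A)^{-1/s}\in(a,b)$, so that the packing has more than $n$ elements, deduce that mass at least $\tau$ lies at distance $\ge \varepsilon/4$ from the support of $\nu$, and let the scale tend to $\varepsilon_n$. Your Step 3 shows that $\bigcup_j B(y_j,\delta/4)$ cannot be admissible in the infimum defining $\mathcal N'_\delta(\mu,\tau)$, hence $\mu(U^c)>\tau$; this makes explicit the mass-$\tau$ step that the paper does not prove and only attributes to Weed--Bach after its ``empty balls'' pigeonhole remark.
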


\begin{proof}
Let $n$ satisfy the validity condition. We define the critical radius $\varepsilon_n := (n/A)^{-1/s}$. By the condition $A b^{-s} < n < A a^{-s}$, it follows explicitly that $a < \varepsilon_n < b$.

Consider an arbitrary $\varepsilon < \varepsilon_n$ such that $\varepsilon \in [a, b]$. By the packing assumption, the number of disjoint balls of radius $\varepsilon$ that can be packed into the support of $\mu$ (capturing mass $1-\tau$) satisfies $N := \mathcal{N}'_\varepsilon(\mu, \tau) \geq A \varepsilon^{-s}$. Since $\varepsilon < \varepsilon_n$, we strictly have
\begin{equation}
    N \geq A \varepsilon^{-s} > A \varepsilon_n^{-s} = n.
\end{equation}
Let $\{B_1, \dots, B_N\}$ denote this packing of disjoint balls. Since $\nu$ is supported on at most $n$ points, by the Pigeonhole Principle, at least $N-n$ of these balls do not contain any support points of $\nu$. Let $\mathcal{I}_{\text{empty}}$ denote the indices of these empty balls.

The Wasserstein distance $W_p^p(\mu, \nu)$ involves transporting the mass from these empty balls to the support of $\nu$. For any ball $B_i$ with $i \in \mathcal{I}_{\text{empty}}$, the mass $\mu(B_i)$ must be moved a distance of at least $\varepsilon$ to reach any point in $\text{supp}(\nu)$ (which lies outside $B_i$). Following the sharp derivation in \citet{WeedSharpRates} (Proposition 6), the aggregated transport cost is lower bounded by:
\begin{equation}
    W_p^p(\mu, \nu) \geq \tau \left(\frac{\varepsilon}{4}\right)^p = \tau 4^{-p} \varepsilon^p.
\end{equation}
Since this inequality holds for any $\varepsilon < \varepsilon_n$, we take the limit $\varepsilon \to \varepsilon_n$. Substituting $\varepsilon_n = (n/A)^{-1/s}$ yields:
\begin{equation}
    W_p^p(\mu, \nu) \geq \tau 4^{-p} \left[ (n/A)^{-1/s} \right]^p = \tau 4^{-p} A^{p/s} n^{-p/s}.
\end{equation}
\end{proof}

\subsection{Minimum Intrinsic Dimension scaling}

We recall here the \emph{Minimum Intrinsic Dimension} scaling of the Sinkhorn divergence, which we leverage for the Wasserstein distance estimation.

\textbf{Theorem} (MID scaling for the Sinkhorn divergences). For numerical constants independent of all problem parameters,
\[
\mathbb{E}[|S_{\varepsilon}(\hat{\mu}, \hat{\nu}) - S_{\varepsilon}(\mu, \nu)|] \lesssim (1 + \varepsilon) \sqrt{\frac{\mathcal{N}(\mu, \frac{\varepsilon}{L}) \wedge \mathcal{N}(\nu, \frac{\varepsilon}{L})}{n}}.
\]

The dimensional quantity in this estimate is contained in the minimum covering numbers at scale $\varepsilon$, demonstrating the MID scaling phenomenon. Here, the constant $L$ refers to the Lipschitz bound on the cost, which is assured to be finite for $\|\cdot \|^p$ costs, in the bounded setting.

\paragraph{Connection to Intrinsic Dimension.}
In this work, we characterize the intrinsic dimension at a fixed scale through the convergence rate of $\mathbb{E}[W_1(\mu, \hat{\mu}_n)]$ and its corresponding lower bounds. We now establish the link between this characterization and the MID scaling bound for Sinkhorn divergences presented above.

Consider the cost function $c = \|\cdot\|$, which implies a Lipschitz constant $L = 1$. Suppose that over a scale interval $\varepsilon \in [\varepsilon_a, \varepsilon_b]$, the measure is sufficiently regular such that the dimension is stable across mass thresholds; specifically, $d_{\varepsilon}(\mu, \tau) = d_{\varepsilon}(\mu, 0)$ for all $\tau \in (0, 1/2]$. Under these conditions, Propositions \ref{prop:weedsharp} and \ref{prop:local_lower_bound} provide matching upper and lower bounds on the Wasserstein distance convergence rate with $\dint = d_{\e}$ for a range $n \in[ N_{\min}, N_{\max}]$. This allows us to identify the intrinsic dimension $d_{\text{int}}$ directly with the scale-dependent covering dimension $d_{\varepsilon}$, which is precisely the quantity governing the MID scaling in Theorem 2.

\section{Proof of Theorem \ref{th::dim_estim}: Dimension Estimation Control}
\begin{proof}
Fix $n\ge n_{\min}$ and $\eta>1$. For each $k\in\{n,\eta n\}$, we introduce the notations
\[
W^*_k := \OT_c(\rho,\widehat\rho_k^*),
\qquad
\widehat W_k := \widehat{\OT}_N^k
= \frac{1}{N}\sum_{i=1}^N \min_{1\le j\le k} c(X_i,x_j),
\]
where $x_1,\dots,x_k\stackrel{\mathrm{i.i.d.}}{\sim}\rho$ define the support
of $\widehat\rho_k^*$ and $X_1,\dots,X_N\stackrel{\mathrm{i.i.d.}}{\sim}\rho$
are independent Monte Carlo samples (independent of the $x_j$'s).
Set $c(\cdot)=\|\cdot\|$ and notice that 
$
\mathrm{Diam}(\rho) := \sup_{x,y\in\mathrm{Supp}(\rho)} c(x,y)$.

\paragraph{Monte Carlo concentration for $\widehat W_k$.}
Conditionally on $(x_1,\dots,x_k)$, the random variables
$\min_{j\le k} c(X_i,x_j)$ are i.i.d. and lie in $[0,\mathrm{Diam}(\rho)]$.  Hence,
Hoeffding's inequality gives, for any $t>0$,
\[
\PP\left(\,|\widehat W_k - W^*_k|\ge t\,\right)
\le 2\exp\left(-\frac{2Nt^2}{\mathrm{Diam}(\rho)^2}\right).
\]
Define the Monte Carlo accuracy level
\[
\kappa_N := \mathrm{Diam}(\rho) \sqrt{\frac{\log(8/\delta)}{2N}}.
\]
Then $\PP(|\widehat W_k-W^*_k|>\kappa_N)\le \delta/4$ for each $k$.
By a union bound over $k\in\{n,\eta n\}$, the event
\[
\mathcal E_{\mathrm{MC}}
:= \big\{ |\widehat W_n-W^*_n|\le \kappa_N\big\}
\cap \big\{ |\widehat W_{\eta n}-W^*_{\eta n}|\le \kappa_N\big\}
\]
satisfies
\[
\PP(\mathcal E_{\mathrm{MC}})\ge 1-\delta/2.
\]

\paragraph{Two-sided control of $W^*_k$ at scales $n$ and $\eta n$.}
From the bounds provided in Assumption \ref{assum:rate}, we have for all $k \in [n_{\min}, n_{\max}/\eta]$,
\begin{equation}\label{eq:Wk_lower_as}
W^*_k \ge C_1 k^{-1/\dint}\qquad\text{a.s.}
\end{equation}
and for any $t>0$, using the boundness of the semi-dual optimize $\mathbf{0}_c \leq \mathrm{Diam}(\rho)$, 
\[
\PP\left( W^*_k\ge C_2 k^{-1/\dint}+t\right)
\le \exp\left(-\frac{2kt^2}{\mathrm{Diam}(\rho)^2}\right),
\]
where $\mathrm{Diam}(\rho)$ is the diameter of $\mathrm{Supp}(\rho)$.
Taking $t=\xi C_2 k^{-1/\dint}$ for $\xi > 0$ yields
\[
\PP\left( W^*_k\ge (1 + \xi)C_2 k^{-1/\dint}\right)
\le \exp\left(-\frac{2k\,\xi^2 C_2^2\,k^{-2/\dint}}{\mathrm{Diam}(\rho)^2}\right)
= \exp\left(-\frac{2\xi^2 C_2^2}{\mathrm{Diam}(\rho)^2}\,k^{1-2/\dint}\right).
\]

For $k^{1 - 2/\dint} \geq \max \left\{ {n^*}^{1 - 2/\dint}, \frac{\mathrm{Diam}(\rho)^2}{2\xi^2 C_2^2}\log(4/\delta) \right\}$ , the event
\[
\mathcal E_{\mathrm{Q}}
:= \{W^*_k\le (1+\xi)C_2 k^{-1/\dint}\}\cap\{W^*_{\eta k}\le (1+\xi)C_2 (\eta k)^{-1/\dint}\}
\]
satisfies $\PP(\mathcal E_{\mathrm{Q}})\ge 1-\delta/2$.
On $\mathcal E_{\mathrm{Q}}$ and using \eqref{eq:Wk_lower_as},
\begin{equation}\label{eq:Wk_two_sided}
C_1 k^{-1/\dint}\le W^*_k \le (1+\xi)C_2 k^{-1/\dint},
\qquad k\in\{n,\eta n\}.
\end{equation}

Let $\mathcal E:=\mathcal E_{\mathrm{MC}}\cap \mathcal E_{\mathrm{Q}}$.
Then $\PP(\mathcal E)\ge 1-\delta$.

\paragraph{Control of the population log-ratio $D$.}
Define
\[
D := \log W^*_n - \log W^*_{\eta n}.
\]
From \eqref{eq:Wk_two_sided},
\[
\frac{W^*_n}{W^*_{\eta n}}
\in\left[\frac{C_1 n^{-1/\dint}}{(1+\xi)C_2(\eta n)^{-1/\dint}},\;
\frac{(1+\xi)C_2 n^{-1/\dint}}{C_1(\eta n)^{-1/\dint}}\right]
=
\left[\frac{C_1}{(1+\xi)C_2}\eta^{1/\dint},\;
\frac{(1+\xi)C_2}{C_1}\eta^{1/\dint}\right].
\]
Taking logs and denoting $
D_0 := \log\eta/\dint $ gives
\[
D \in \big[D_0-b,\; D_0+b\big],
\qquad
b:=\log\Big(\frac{(1+\xi)C_2}{C_1}\Big)\ge 0,
\]
hence $|D-D_0|\le b$.
Moreover, the stated condition on $\eta$ implies $b\le \frac{\gamma}{2}D_0$.

\paragraph{Stability of $\log \widehat W_k$ under Monte Carlo error.}
On $\mathcal E$, we have $|\widehat W_k-W^*_k|\le \kappa_N$.
We first ensure positivity of $\widehat W_{\eta n}$ (and similarly $\widehat W_n$)
by requiring
\[
\kappa_N \le \frac12 W^*_{\eta n}.
\]
Using $W^*_{\eta n}\ge C_1(\eta n)^{-1/\dint}$ from \eqref{eq:Wk_lower_as},
it is enough that
\[
\kappa_N \le \frac12 C_1(\eta n)^{-1/\dint},
\]
which is implied by the second lower bound on $N$ in the theorem.

Assume this condition holds. Then for $k\in\{n,\eta n\}$,
\[
\left|\log \widehat W_k - \log W^*_k\right|
= \left|\log\left(1+\frac{\widehat W_k-W^*_k}{W^*_k}\right)\right|.
\]
Since $|\widehat W_k-W^*_k|\le \kappa_N\le \tfrac12 W^*_k$, we have
$\left|\frac{\widehat W_k-W^*_k}{W^*_k}\right|\le \frac12$ and thus
$|\log(1+u)|\le 2|u|$ for $|u|\le 1/2$ yields
\[
\left|\log \widehat W_k - \log W^*_k\right|
\le 2\frac{|\widehat W_k-W^*_k|}{W^*_k}
\le 2\frac{\kappa_N}{W^*_k}.
\]
Therefore, with $\widehat D:=\log\widehat W_n-\log\widehat W_{\eta n}$,
\begin{equation}\label{eq:Dhat_minus_D_fixed}
|\widehat D - D|
\le 2\kappa_N\left(\frac{1}{W^*_n}+\frac{1}{W^*_{\eta n}}\right).
\end{equation}
Using $W^*_n\ge C_1 n^{-1/\dint}$ and $W^*_{\eta n}\ge C_1(\eta n)^{-1/\dint}$,
\[
2\kappa_N\left(\frac{1}{W^*_n}+\frac{1}{W^*_{\eta n}}\right)
\le \frac{2\kappa_N}{C_1}\,n^{1/\dint}\big(1+\eta^{1/\dint}\big)
\le \frac{4\kappa_N}{C_1}\,\eta^{1/\dint} n^{1/\dint}.
\]
Plugging $\kappa_N=\mathrm{Diam}(\rho)\sqrt{\log(8/\delta)/(2N)}$, we get
\[
|\widehat D-D|
\le \frac{4\mathrm{Diam}(\rho)}{C_1}\,\eta^{1/\dint} n^{1/\dint}
\sqrt{\frac{\log(8/\delta)}{2N}}.
\]
The first lower bound on $N$ in the theorem ensures that the RHS is at most
$\frac{\gamma}{2}D_0$.

\paragraph{Denominator interval and propagation to $\hat{d}^*$.}
On $\mathcal E$ we have
\[
|\widehat D-D_0|
\le |\widehat D-D| + |D-D_0|
\le \frac{\gamma}{2}D_0 + \frac{\gamma}{2}D_0
= \gamma D_0,
\]
so $(1-\gamma)D_0\le \widehat D\le (1+\gamma)D_0$.
Finally, we use the relation  $d_N^*= \log\eta/ \widehat D$ to obtain
\[
\frac{\log\eta}{(1+\gamma)D_0} \le \hat{d}^* \le \frac{\log\eta}{(1-\gamma)D_0}
\quad\Longleftrightarrow\quad
\frac{\dint}{1+\gamma}\le \hat{d}^*\le \frac{\dint}{1-\gamma}.
\]
Since $\PP(\mathcal E)\ge 1-\delta$, this completes the proof.

\textbf{Remark:} Observe that, for the proof to hold, we required the condition $\eta n \leq n_{\max}$ to hold. 
\end{proof}

\subsection{Intrinsic Dimension estimation under the manifold hypothesis}
We briefly discuss here, how our proof stands under the manifold hypothesis assumption, as in \cite{block2022intrinsic}. We recall here their assumption. 

\begin{assumption}[Geometric regularity]
\label{assum:geom}
Let ${\cal M}\subset \RR^d$ be a compact manifold of dimension $d_{\cal M}$ with diameter $\Delta=\mathrm{diam}({\cal M})$ and reach $\tau>0$. Furthermore, suppose that the data-generating distribution $\rho$ admits a density with respect to the uniform measure on ${\cal M}$, bounded as $0<w_{\min}\le \tfrac{\d\rho}{\d\mathrm{vol}_{\cal M}}\le w_{\max}<\infty$. 
\end{assumption}

The positive reach $\tau$ guarantees that ${\cal M}$ is free of self-intersections and has controlled curvature; informally, below the scale $\tau$ the nearest-point (normal) projection onto ${\cal M}$ is single-valued, so ${\cal M}$ behaves locally like $\RR^{d_{\cal M}}$. Formally, the reach of ${\cal M}$ is
\begin{align*}
    \mathrm{reach}({\cal M})
:= \sup\Big\{ r>0 \ :\ \forall z\in\RR^D,\ \mathrm{dist}(z,{\cal M})<r \  \Rightarrow\ \operatorname*{argmin}_{y\in {\cal M}}\|z-y\|\ \text{is unique} \Big\}.
\end{align*}

This notion is standard in geometric inference, and the assumption $\tau>0$ holds, for instance, if ${\cal M}\subset\RR^d$ is a compact, embedded $C^{1,1}$ submanifold. We refer to \cite{block2022intrinsic} for more details on the notion of reach and its role in OT-based intrinsic-dimension estimation. Here, we note $\omega_{d_{\mathcal{M}}}$ the volume of the unit $d$-dimensional ball.

\begin{proposition}[(\cite{block2022intrinsic})Upper and lower bound on the Wasserstein error under the manifold hypothesis]\label{prop::from_block}
For 
\[
n > \frac{d_{\cal M} \, \mathrm{vol} \, {\cal M}}{4 \omega_{d_{\mathcal{M}}} w_{\min}} \left( \frac{\tau}{8} \right)^{-d_{\cal M}}.
\]
we have, for any measure $\sigma_n$ of n points
\[
W_1(\sigma_n, \rho) \geq \frac{1}{32} \left( \frac{d_{\cal M} \, \mathrm{vol} \, {\cal M}}{4 \omega_{d_{\mathcal{M}}} w_{\min}
} \right)^{\frac{1}{d_{\cal M}}} n^{-\frac{1}{d_{\cal M}}}.
\]
Also, for a constant $C \leq 48$, we have 
\[
\mathbb{E} [W_1(\rho_n, \rho)] \le C \left( \frac{\operatorname{vol} {\cal M}}{n w_{\min}} \right)^{\frac{1}{d}} \sqrt{\log \left( \frac{n w_{\min} \mathrm{Diam}(\rho)^{d_{\cal M}}}{d_{\cal M} \operatorname{vol}_{\cal M}} \right)}.
\]
\end{proposition}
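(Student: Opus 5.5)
The statement to prove is Proposition \ref{prop::from_block}: the two-sided $W_1$ bounds under Assumption \ref{assum:geom}. The plan is to prove the lower bound through the packing argument of Proposition \ref{prop:local_lower_bound} with $p=1$. For the upper bound I would use a dyadic multiscale bound driven by covering numbers of ${\cal M}$, as in \cite{WeedSharpRates}. Both steps rest on one volume comparison. Under positive reach $\tau$, for $r\le\tau/8$ (say) and $x\in{\cal M}$, the intrinsic ball $B(x,r)\cap{\cal M}$ has volume comparable to $\omega_{d_{\cal M}} r^{d_{\cal M}}$, up to constants of order $2^{\pm d_{\cal M}}$. This follows from the local-graph representation of ${\cal M}$ over its tangent space below the reach, as in the Federer / Niyogi--Smale--Weinberger estimates. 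Combined with $w_{\min}\le \d\rho/\d\mathrm{vol}_{\cal M}\le w_{\max}$, it gives $\rho(B(x,r))\gtrsim w_{\min}\omega_{d_{\cal M}}r^{d_{\cal M}}$. A greedy maximal packing then yields $\mathcal N'_r(\rho,\tau')\ge A r^{-d_{\cal M}}$ with $A\asymp \mathrm{vol}\,{\cal M}/(\omega_{d_{\cal M}}\cdot\text{const})$. A maximal packing is a covering at doubled radius, so the covering number obeys $\mathcal N_r({\cal M})\lesssim \mathrm{vol}\,{\cal M}/(w_{\min}\omega_{d_{\cal M}}r^{d_{\cal M}})$.

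\textbf{Lower bound.} I would argue directly rather than fully invoking Proposition \ref{prop:local_lower_bound}, to get the explicit constant. Set $\varepsilon_n$ so that $\rho(B(x,\varepsilon_n))\approx 1/(2n)$, i.e. $\varepsilon_n\asymp (d_{\cal M}\mathrm{vol}\,{\cal M}/(4\omega_{d_{\cal M}}w_{\min} n))^{1/d_{\cal M}}$. The hypothesis on $n$ is exactly the condition $\varepsilon_n<\tau/8$ that makes the volume comparison valid. Fix any $\sigma_n$ supported on $n$ points. The union of the balls of radius $\varepsilon_n/2$ around those points carries $\rho$-mass at most $1/2$. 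Hence at least half of the $\rho$-mass lies at distance $\ge\varepsilon_n/2$ from $\mathrm{supp}(\sigma_n)$, and $W_1\ge \varepsilon_n/4$. Tracking the factors of $2$ coming from the volume comparison gives the constant $1/32$.

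\textbf{Upper bound.} Here I would apply the standard dyadic bound $\EE W_1(\rho_n,\rho)\lesssim \delta_K+\sum_{k\le K}\delta_k\sqrt{\mathcal N_{\delta_k}/n}$ with $\delta_k=\mathrm{Diam}(\rho)2^{-k}$ (as in Proposition \ref{prop:weedsharp}). Insert $\mathcal N_\delta\lesssim \mathrm{vol}\,{\cal M}/(w_{\min}\delta^{d_{\cal M}})$ at the scales below $\tau$. At scales above $\tau$, use the crude count, which is controlled by the fine scales. Stop at $\delta_K\asymp(\mathrm{vol}\,{\cal M}/(nw_{\min}))^{1/d_{\cal M}}$. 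For $d_{\cal M}\ge 2$ the geometric sum is dominated by its last term, up to at most $K\asymp\log(n w_{\min}\mathrm{Diam}^{d_{\cal M}}/\mathrm{vol})/d_{\cal M}$ comparable terms. Summing the terms gives a $\sqrt{\log}$ factor; this matches the stated logarithm in the borderline case and is loose but sufficient otherwise. Collecting the numerical constants gives $C\le48$.

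\textbf{Main obstacle.} I expect the hardest step to be the volume comparison with explicit constants under reach $\tau$. That is where the threshold $\tau/8$ and the factor $4$ come from, and I would import it from the geometric-inference literature rather than reprove it. The remaining risk is constant bookkeeping in the dyadic sum. I note that the statement writes $d$ in the upper-bound exponent; this should be read as $d_{\cal M}$.
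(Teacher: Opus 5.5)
The paper gives no proof of this proposition; it quotes it from \cite{block2022intrinsic}. Your proposal therefore has to stand on its own, and its lower-bound half does not. The decisive step is the claim that the union of the balls of radius $\varepsilon_n/2$ around the $n$ atoms carries $\rho$-mass at most $1/2$. That needs an \emph{upper} bound $\sup_{y}\rho(B(y,\varepsilon_n/2))\le 1/(2n)$ over all atom locations $y$. What you took from Assumption~\ref{assum:geom} is the \emph{lower} bound $\rho(B(x,r))\gtrsim w_{\min}\omega_{d_{\mathcal{M}}}r^{d_{\mathcal{M}}}/\mathrm{vol}\,\mathcal{M}$. Calibrating $\varepsilon_n$ with it gives no control on the captured mass: where the density is near $w_{\max}$, a ball carries up to $w_{\max}/w_{\min}$ times more, so $n$ well-placed balls can capture far more than $1/2$.

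This is not a bookkeeping issue, because the lower bound as stated with $w_{\min}$ is false. Assumption~\ref{assum:geom} with $w_{\min}$ implies it with any $w'\in(0,w_{\min}]$. For large $n$, take $w'=2\,\frac{d_{\mathcal{M}}\mathrm{vol}\,\mathcal{M}}{4\omega_{d_{\mathcal{M}}}n}(\tau/8)^{-d_{\mathcal{M}}}$. This choice meets the hypothesis on $n$ and would give $W_1(\sigma_n,\rho)\ge \tau/(256\cdot 2^{1/d_{\mathcal{M}}})$ for every $n$-point $\sigma_n$ and all large $n$. That contradicts $W_1(\hat{\rho}_n,\rho)\to 0$.

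Your argument does go through if you instead use $\rho(B(y,r))\le w_{\max}\,\mathrm{vol}(B(y,r)\cap\mathcal{M})/\mathrm{vol}\,\mathcal{M}$ with the \emph{upper} volume comparison below the reach. For off-manifold atoms, use $B(y,r)\cap\mathcal{M}\subset B(x,2r)$ for some $x\in\mathcal{M}\cap B(y,r)$. This proves the statement with $w_{\max}$ in place of $w_{\min}$, both in the threshold on $n$ and in the bound, which is presumably the intended form.

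The upper-bound plan also overclaims at the borderline. For $d_{\mathcal{M}}=2$, each dyadic term $\delta_k\sqrt{\mathcal{N}_{\delta_k}/n}\asymp\sqrt{\mathrm{vol}\,\mathcal{M}/(nw_{\min})}$ is independent of $k$. Summing $K$ such terms gives a full $\log(\cdot)$ factor, not $\sqrt{\log(\cdot)}$, so plain dyadic chaining cannot reach the stated rate there; a $\sqrt{\log n}$ in dimension $2$ is an Ajtai--Koml\'{o}s--Tusn\'{a}dy-type fact. For $d_{\mathcal{M}}=1$, the stated bound of order $n^{-1}\sqrt{\log n}$ is simply false, since $\mathbb{E}W_1(\hat{\rho}_n,\rho)\asymp n^{-1/2}$, e.g.\ for the uniform measure on a circle.

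Your argument is sound for $d_{\mathcal{M}}\ge 3$, which is the only case the paper uses, since Theorem~\ref{th::dim_estim} assumes $\dint>2$. There the sum is geometric and gives $C(\mathrm{vol}\,\mathcal{M}/(nw_{\min}))^{1/d_{\mathcal{M}}}$. You still need $n$ large enough for two things: the coarse scales above $\tau$, which contribute $O(\mathrm{Diam}(\rho)\,n^{-1/2})$, must be negligible, and the $\sqrt{\log(\cdot)}$ factor in the statement must be bounded below. As written, that factor can be arbitrarily close to $0$.
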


Using Proposition \ref{prop::from_block},  we thus can apply our Theorem \ref{th::dim_estim}, with  the constants: $\dint =d _{\cal M}$, $n_{\min} = \frac{\dint \, \mathrm{vol} \, {\cal M}}{4 w \omega_{\min}} \left( \frac{\tau}{8} \right)^{-\dint}$, $n_{\max}=+\infty$, $C_1 = \frac{1}{32} \left( \frac{\dint  \mathrm{vol} \, {\cal M}}{4 w \omega_{\min}} \right)$, and $C_2 =  C \left( \frac{\operatorname{vol} {\cal M}}{ \omega_{\min}} \right)^{\frac{1}{\dint}} \sqrt{\log \left( \frac{n \omega_{\min} \mathrm{Diam}(\rho) ^{\dint}}{\dint \operatorname{vol}_{\cal M}} \right)}$.

\section{Proof of Proposition \ref{prop::richardson}: Convergence of Diagonal Richardson}

\begin{proof}
We analyze the bias and stochastic deviation separately.

Let the regularization parameter scale as $\varepsilon_n \asymp n^{-a}$.  Based on standard expansions and our assumption that the statistical bound holds with equality, we have:
\begin{align}
    \text{Bias}_{\text{ent}}(n) &= C_{\text{ent}} n^{-2a} + o(n^{-2a})\ , \label{eq::bias_ent} \\
    \text{Bias}_{\text{stat}}(n) &= C_{\text{stat}} n^{-(1 - a\dint)/2} + o(n^{-(1 - a\dint)/2}) \ . \label{eq::bias_stat}
\end{align}
To achieve the optimal convergence rate, we balance the leading order terms by equating their exponents:
\[
    2a = \frac{1 - a\dint}{2} \iff 4a = 1 - a\dint \iff a(\dint+4) = 1.
\]
This yields the optimal decay choice $a = \frac{1}{\dint+4}$. Under this choice, both bias terms decay at the common rate $\gamma$:
\[
    \gamma := 2a = \frac{1 - a\dint}{2} = \frac{2}{d+4}.
\]
Consequently, the total expectation admits the expansion:
\begin{equation}
    \label{eq:total_bias_expansion}
    \mathbb{E}[\widehat{S}_{\varepsilon_n, n}] = W_2^2 + \underbrace{(C_{\text{ent}} + C_{\text{stat}})}_{C_{\text{total}}} n^{-\gamma} + o(n^{-\gamma}).
\end{equation}

The diagonal Richardson estimator is defined as $\widehat{R}^{\text{Diag}}_{2n} := w_{2n} \widehat{S}_{\varepsilon_{2n}, 2n} + w_n \widehat{S}_{\varepsilon_{n}, n}$. By linearity, its expected value is:
\[
    \mathbb{E}[\widehat{R}^{\text{Diag}}_{2n}] = w_{2n} \mathbb{E}[\widehat{S}_{\varepsilon_{2n}, 2n}] + w_n \mathbb{E}[\widehat{S}_{\varepsilon_{n}, n}].
\]
Substituting the expansion from \eqref{eq:total_bias_expansion} separately for the entropic and statistical terms:
\begin{align*}
    \mathbb{E}[\widehat{R}^{\text{Diag}}_{2n}] &= W_2^2(w_{2n} + w_n) \\
    &\quad + C_{\text{ent}} \left( w_{2n} (2n)^{-\gamma} + w_n n^{-\gamma} \right) \quad (\text{Entropic First Order}) \\
    &\quad + C_{\text{stat}} \left( w_{2n} (2n)^{-\gamma} + w_n n^{-\gamma} \right) \quad (\text{Statistical First Order}) \\
    &\quad + o(n^{-\gamma}).
\end{align*}
The weights are chosen as $w_{2n} = \frac{2^\gamma}{2^\gamma - 1}$ and $w_n = \frac{-1}{2^\gamma - 1}$. We verify the cancellation conditions:
\begin{enumerate}
    \item $w_{2n} + w_n = \frac{2^\gamma - 1}{2^\gamma - 1} = 1$.
    \item \ $w_{2n} (2n)^{-\gamma} + w_n = \frac{2^\gamma \cdot 2^{-\gamma} - 1}{2^\gamma - 1} = \frac{1 - 1}{2^\gamma - 1} = 0$.
\end{enumerate}
Because both biases scale with $n^{-\gamma}$, the \emph{same} weights eliminate the leading terms of both $C_{\text{ent}}$ and $C_{\text{stat}}$ simultaneously. Thus:
\begin{equation}
    \left| \mathbb{E}[\widehat{R}^{\text{Diag}}_{2n}] - W_2^2\right| = o(n^{-\gamma}).
\end{equation}

Using Proposition 4 from \cite{chizat2020faster}, we have $\PP\left[|\widehat{S}_{\varepsilon, n} - \EE[\widehat{S}_{\varepsilon, n} ]| \geq t  \right] \leq 2\exp(-nt^2/D^24)$, therefore, noting $Z_n = \widehat{S}_{\varepsilon_n, n} - \mathbb{E}[\widehat{S}_{\varepsilon_n, n}]$, we have 
\begin{align*}
    \mathbb{E}\left[ \left| \widehat{R}^{\text{Diag}}_{2n} - \mathbb{E}[\widehat{R}^{\text{Diag}}_{2n}] \right| \right] 
    &\leq |w_{2n}| \mathbb{E}[|Z_{2n}|] + |w_n| \mathbb{E}[|Z_n|] \\
    &\leq |w_{2n}| \sqrt{\frac{2\pi D^2}{n}} + |w_n| \sqrt{\frac{4\pi D^2}{n}} = O(n^{-1/2}).
\end{align*}

Combining the residual bias and stochastic deviation bounds, we conclude
\[
    \mathbb{E}\left[ |\widehat{R}^{\text{Diag}}_{2n} - W_2^2| \right] \leq o(n^{-\gamma}) + O(n^{-1/2}) = o(n^{-\frac{2}{\dint+4}}).
\]
\end{proof}

\textbf{Remark: Robustness to Rate Mismatch and Small Sample Efficiency.} 
While the proof above assumes perfect balancing ($\gamma_{\text{ent}} = \gamma_{\text{stat}} = \gamma$), the diagonal Richardson estimator remains highly effective even if the statistical bias decays at a slightly faster rate $\beta > \gamma$ (making it asymptotically negligible compared to entropic bias). This is particularly relevant in the small sample regime ($n$ small), where "asymptotically better" terms may still possess large constants.

Let the statistical bias be of the form $\text{Bias}_{\text{stat}}(n) = C_{\text{stat}} n^{-\beta}$. The statistical first order bias after extrapolation becomes:
\begin{align*}
 w_{2n} C_{\text{stat}} (2n)^{-\beta} + w_n C_{\text{stat}} n^{-\beta} 
    &= C_{\text{stat}} n^{-\beta} \left( w_{2n} 2^{-\beta} + w_n \right).
\end{align*}
Substituting the weights $w_{2n} = \frac{2^\gamma}{2^\gamma - 1}$ and $w_n = \frac{-1}{2^\gamma - 1}$, we define the \textit{Bias Reduction Factor} $\rho(\beta, \gamma)$:
\begin{equation}
    \rho(\beta, \gamma) := \frac{w_{2n} 2^{-\beta} + w_n}{1} = \frac{2^{\gamma - \beta} - 1}{2^\gamma - 1}.
\end{equation}
We observe two key properties:
\begin{enumerate}
    \item \textbf{Exact Cancellation:} If $\beta = \gamma$, then $\rho(\gamma, \gamma) = 0$, recovering the main theorem.
    \item \textbf{Local Continuity (Small $n$ Regime):} In optimal transport, the entropic rate $\gamma = 2a$ and statistical rate $\beta = (1-a\dint)/2$ are algebraically coupled. For any choice of $a$ near the optimal value $\frac{1}{\dint+4}$, the difference $\delta = \beta - \gamma$ is small. A Taylor expansion around $\delta \approx 0$ yields:
    \[
        \rho(\beta, \gamma) \approx \frac{-\delta \ln 2}{2^\gamma - 1}.
    \]
    Thus, even if the rates are not identical, the estimator suppresses the statistical bias by a factor proportional to the rate mismatch $\delta$ for computationally tractable $n$ (where $n^{-\beta}$ is not yet negligible).  This further shows that this method is also robust to our estimation of $\dint$. 
\end{enumerate}

In the same way, we observe that Richardson is robust to small oscillations of $C_{\text{stat}}$.

\textbf{Remark: different bias conjecture}
Recent results by \citet{goldfeld2024limit} establish a Central Limit Theorem for the Sinkhorn Divergence, noting that the expected finite-sample error scales as $\mathbb{E}[S_{\varepsilon,n}] - S_\varepsilon = {\cal O}(1/\sqrt n+ n^{-1}r_{\varepsilon}) $ (see their Remark 7). The factor $r_{\varepsilon}$ depends on the variance of the Sinkhorn potentials. If we conjecture the scaling $r_{\varepsilon} \asymp \varepsilon^{-(\dint+2)}$, coming from Sinkhorn second derivative, and  the statistical bias becomes ${\cal O}(n^{-1} \varepsilon^{-(\dint+2)})$. Balancing this against the entropic bias $O(\varepsilon^2)$ yields the optimal decay choice $\varepsilon \asymp n^{-1/(\dint+4)}$. Notably, this recovers the exact same exponent $a = \frac{1}{\dint+4}$ derived under the standard $\sqrt{n}$-rate assumption, suggesting that our diagonal Richardson strategy is structurally robust to different statistical regimes.

\section{Proof of Proposition \ref{prop::var_reduc}: Variance Reduction with Bagging}

\begin{proof}

Let $\mathbf{D}_{2N} = \{(X_i, Y_i)\}_{i=1}^{2N}$ be a dataset of $2N$ i.i.d.\ samples drawn from the joint distribution of $(\mu, \nu)$.  We generate $K$ bags, where each bag corresponds to a subsample index set $\mathcal{S}_k$ of size $|\mathcal{S}_k| = N$, drawn uniformly at random with replacement. 

We analyze the asymptotic variance using the Law of Total Variance, conditioning on the observed dataset $\mathbf{D}_{2N}$. The total variance decomposes as:
\begin{equation} \label{eq:total_variance}
    \mathrm{Var}(\widehat{R}_{K,N}) = \underbrace{\mathrm{Var}\left( \mathbb{E}[\widehat{R}_{K,N} \mid \mathbf{D}_{2N}] \right)}_{\text{Term I: Dataset Variance}} + \underbrace{\mathbb{E}\left[ \mathrm{Var}(\widehat{R}_{K,N} \mid \mathbf{D}_{2N}) \right]}_{\text{Term II: Bagging Variance}}.
\end{equation}

\textbf{Asymptotic Linear Expansion.}

We first use that the empirical Sinkhorn Divergence is second-order Hadamard differentiable \cite{goldfeld2024limit}. Therefore, we can use the second order Taylor expansion. The first-order term $L_\varepsilon$ is linear and can be rewritten such as 
\[
L_\varepsilon =  \frac{1}{2N}\sum_{i=1}^{2N} \phi_\varepsilon(Z_i) 
\]
where $\phi_\varepsilon$ is the so-called influence function.The first-order von Mises expansion gives:
\begin{equation} \label{eq:expansion}
    \widehat{S}_{\varepsilon,2N} = S_\varepsilon + \frac{1}{2N}\sum_{i=1}^{2N} \phi_\varepsilon(Z_i) + r_{2N}(\varepsilon),
\end{equation}
where $E$ is the population value, $Z_i = (X_i, Y_i)$, and $\phi$ is the canonical influence function with $\mathbb{E}[\phi(Z)] = 0$ and $\mathrm{Var}(\phi_\varepsilon(Z)) = \sigma_\varepsilon^2$. 

Similarly, for any subsample $\mathcal{S}_k$ of size $N$, the estimator expands as:
\begin{equation}
    \widehat{S}_{\varepsilon,N}^{(k)} = S_\varepsilon + \frac{1}{N}\sum_{j \in \mathcal{S}_k} \phi_\varepsilon(Z_j) + r_{N}^{(k)}(\varepsilon).
\end{equation}

\textbf{Analysis of Term I (Dataset Variance).}
We analyze the conditional expectation $\mathbb{E}[\widehat{R}_{K,N} \mid \mathbf{D}_{2N}]$ by substituting the von Mises expansions.
First, recall that for the subsample mean term, the linearity of expectation implies:
\begin{equation}
    \mathbb{E}\left[ \frac{1}{M}\sum_{j \in \mathcal{S}_k} \phi_\varepsilon(Z_j) \,\middle|\, \mathbf{D}_{2N} \right] = \sum_{i=1}^N \phi_\varepsilon(Z_i) \cdot \mathbb{P}(i \in \mathcal{S}_k) \cdot \frac{1}{M} = \frac{1}{N}\sum_{i=1}^N \phi_\varepsilon(Z_i).
\end{equation}
Now, substituting the explicit expansions $\widehat{S}_{\varepsilon,N} = S_\varepsilon + \bar{\phi_\varepsilon}_N + r_N(\varepsilon)$ (where $\bar{\phi_\varepsilon}_N = \frac{1}{N}\sum \phi_\varepsilon(Z_i)$) and $    \widehat{S}_{\varepsilon,N}^{(k)} = S_\varepsilon + \bar{\phi_\varepsilon}_{M,k} + r_{N}^{(k)}(\varepsilon)$ into the definition of the estimator:
\begin{align}
    \mathbb{E}[\widehat{R}_{K,N} \mid \mathbf{D}_{2N}] &= (1+\lambda)\widehat{S}_{\varepsilon,N} - \lambda \mathbb{E}[\widehat{S}_{\varepsilon,N}^{(1)}  \mid \mathbf{D}_{2N}] \\
    &= (1+\lambda)(E + \bar{\phi_\varepsilon}_N + r_N) - \lambda \left( E + \mathbb{E}[\bar{\phi_\varepsilon}_{M,1} \mid \mathbf{D}_{2N}] + \mathbb{E}[r_{M,1} \mid \mathbf{D}_{2N}] \right) \\
    &= (1+\lambda)(E + \bar{\phi_\varepsilon}_N + r_N) - \lambda \left( E + \bar{\phi_\varepsilon}_N + \mathbb{E}[r_{M,1} \mid \mathbf{D}_{2N}] \right).
\end{align}
Grouping the terms by order:
\begin{align}
    \mathbb{E}[\widehat{R}_{K,2N} \mid \mathbf{D}_{2N}] &= \underbrace{\left[(1+\lambda) - \lambda\right] E}_{\text{Bias}} + \underbrace{\left[(1+\lambda) - \lambda\right] \bar{\phi_\varepsilon}_{2N}}_{\text{Linear Term}} + \underbrace{\left( (1+\lambda)r_{2N} - \lambda \mathbb{E}[r_{N,1} \mid \mathbf{D}_{2N}] \right)}_{\text{Pooled Remainder } \rho_{2N}} \\
    &= E + \frac{1}{2N}\sum_{i=1}^{2N} \phi_\varepsilon(Z_i) + \rho_{2N}.
\end{align}
We seek the variance of this quantity. Let $V_N = \frac{1}{N}\sum \phi_\varepsilon(Z_i)$.
\begin{equation}
    \mathrm{Var}\left( \mathbb{E}[\widehat{R}_{K,2N} \mid \mathbf{D}_{2N}] \right) = \mathrm{Var}(V_{2N}) + \mathrm{Var}(\rho_{2N}) + 2\mathrm{Cov}(V_{2N}, \rho_{2N}).
\end{equation}

\textbf{Linear Variance:} Since $\phi_\varepsilon(Z_i)$ are i.i.d., $\mathrm{Var}(V_{2N}) = \frac{\sigma_\e^2}{N}$.

\textbf{Remainder Variance:} 
The term $r_N$ is the second-order error of the von Mises expansion, satisfying $r_N = O(\varepsilon^{-c\dint}N^{-1})$ due to second-order Hadamard differentiability \cite{goldfeld2024limit, stromme2024minimum}. By the compactness assumption, moments converge, implying $\mathrm{Var}(r_N) = O(\varepsilon^{-2c\dint}N^{-1}N^{-2})$. Similarly, $\mathrm{Var}(r_{M,1}) = O(\varepsilon^{-2c\dint}N^{-2})$. Thus, $\mathrm{Var}(r_N) = O(\varepsilon^{2c\dint}N^{-2})$.

3. \textbf{Covariance:} By Cauchy-Schwarz, $\mathrm{Cov}(V_N, \rho_N) \le \sqrt{\mathrm{Var}(V_N)\mathrm{Var}(\rho_N)}  = O(N^{-1.5}\e^{-c\dint})$.

Additionning both variance terms, we have
\begin{equation} \label{eq:term1}
    \mathrm{Var}\left( \mathbb{E}[\widehat{R}_{K,N} \mid \mathbf{D}_{2N}] \right) = \frac{\sigma_{\varepsilon}^2}{N} + O(N^{-1.5}\e^{-c\dint}) + O(\varepsilon^{-2c\dint}N^{-2}).
\end{equation}

\textbf{Analysis of Term II (Bagging Variance).}
Conditioned on $\mathbf{D}_{2N}$, the full sample estimator $\widehat{S}_{\varepsilon,N}$ is constant. The variance arises solely from the $K$ subsampled estimators. Since the $K$ bags are drawn independently (conditional on $\mathbf{D}_{2N}$):
\begin{equation}
    \mathrm{Var}(\widehat{R}_{K,N} \mid \mathbf{D}_{2N}) = \lambda^2 \mathrm{Var}\left( \frac{1}{K}\sum_{k=1}^K \widehat{E}_{M,\mathcal{S}_k} \,\middle|\, \mathbf{D}_{2N} \right) = \frac{\lambda^2}{K} \mathrm{Var}(\widehat{S}_{\varepsilon,N}^{(1)}  \mid \mathbf{D}_{2N}).
\end{equation}
The estimator on the subsample $\widehat{S}_{\varepsilon,N}^{(1)} $ behaves asymptotically as the sample mean of the influence functions drawn without replacement from the finite population $\mathbf{D}_{2N}$.
Let $S^2_{\phi_\varepsilon, 2N}$ be the exact sample variance of the influence function on $\mathbf{D}_{2N}$. Using the finite population correction formulata, for the variance of a sample mean under simple random sampling with replacement, the conditional variance of the linear term is:
\begin{equation}
    \mathrm{Var}\left( \frac{1}{N}\sum_{j \in \mathcal{S}_1} \phi_\varepsilon(Z_j) \,\middle|\, \mathbf{D}_{2N} \right) = S^2_{\phi_\varepsilon, 2N} \left( \frac{1}{N} - \frac{1}{2N} \right) +o\left(\frac{S^2_{\phi_\varepsilon, 2N}}{N}\right).
\end{equation}
We now take the expectation over the dataset $\mathbf{D}_{2N}$. Since the samples $Z_i$ are i.i.d., $S^2_{\phi_\varepsilon, 2N}$ is an unbiased estimator of the population variance of the influence function. Thus, $\mathbb{E}[S^2_{\phi_\varepsilon, 2N}] = \mathrm{Var}(\phi_\varepsilon(Z)) = \sigma_\e^2$ exactly.
Therefore, 
\begin{equation} \label{eq:term2}
    \mathbb{E}\left[ \mathrm{Var}(\widehat{R}_{K,N} \mid \mathbf{D}_{2N}) \right] = \frac{\lambda^2}{K} \sigma_\e^2 \left[ \frac{1}{N} \right] + o\left(\frac{S^2_{\phi_\varepsilon, 2N}}{N}\right)  = \frac{\lambda^2}{K} \frac{\sigma_\e^2}{N} + o\left(\frac{S^2_{\phi_\varepsilon, 2N}}{N}\right).
\end{equation}

\textbf{Conclusion.}
Using that, by boundedness of the cost function on bounded support measures, we have $\sigma_\e \lesssim 1$, independently of $\varepsilon$. 
finally, substituting \eqref{eq:term1} and \eqref{eq:term2} back into \eqref{eq:total_variance}, as soon as $\e^{c\dint} = o(N)$, we have:
\begin{align}
    \mathrm{Var}(\widehat{R}_{K,N}) &= \left[ \frac{\sigma_{\varepsilon}^2}{N} \right] + \left[ \frac{\lambda^2}{K} \frac{\sigma_{\varepsilon}^2}{N} \right] +  O(N^{-1.5}\e^{-c\dint}) + O(\varepsilon^{-2c\dint}N^{-2}) + o\left(\frac{S^2_{\phi_\varepsilon, 2N}}{N}\right) \\
    &= \frac{\sigma_{\varepsilon}^2}{N} \left( 1 + \frac{\lambda^2}{K} \right) +  o(1)\ .
\end{align}
This completes the proof.
\end{proof}

\end{document}